\documentclass[sigconf]{acmart}

\usepackage[table]{xcolor}
\usepackage{multirow}
\usepackage{array}
\usepackage{enumitem}
\usepackage{lipsum}
\usepackage{subcaption} 
\usepackage{hyperref}
\usepackage{cleveref}
\crefname{appendix}{Appendix}{Appendices}
\Crefname{appendix}{Appendix}{Appendices}
\usepackage{bm}
\usepackage{fancyvrb}
\input{mintedcache/default.pygstyle}
\definecolor{lightgray}{gray}{0.9}
\definecolor{palegray}{gray}{0.98}

\usepackage{algorithm}
\usepackage{algorithmic}
\usepackage{thmtools, thm-restate}
\usepackage{etoc} 

\newenvironment{sproof}{%
  \renewcommand{\proofname}{Proof Sketch}\proof}{\endproof}

\newcommand{\affKAIST}{\affiliation{%
  \institution{Korea Advanced Institute of Science and Technology}%
  \city{Seongnam}\state{Gyeonggi-do}\country{Republic of Korea}}}
\newcommand{\affINEEJI}{\affiliation{%
  \institution{INEEJI Corp.}%
  \city{Seongnam}\state{Gyeonggi-do}\country{Republic of Korea}}}

\newcommand{\sigcode}{\textcolor{magenta}{\url{https://github.com/leekwoon/sig/}}}
\newcommand{\sigdoi}{\textcolor{magenta}{\url{https://doi.org/10.6084/m9.figshare.32310030}}}

\AtBeginDocument{%
  }

\copyrightyear{2026}
\acmYear{2026}
\setcopyright{cc}
\setcctype{by-nc-nd}
\acmConference[KDD '26]{Proceedings of the 32nd ACM SIGKDD Conference on Knowledge Discovery and Data Mining V.2}{August 09--13, 2026}{Jeju Island, Republic of Korea}
\acmBooktitle{Proceedings of the 32nd ACM SIGKDD Conference on Knowledge Discovery and Data Mining V.2 (KDD '26), August 09--13, 2026, Jeju Island, Republic of Korea}
\acmDOI{10.1145/3770855.3817830}
\acmISBN{979-8-4007-2259-2/2026/08}

\acmSubmissionID{v2rtp1295}

\begin{document}

\title{Spectral Integrated Gradients for Coarse-to-Fine Feature Attribution}





\author{Soyeon Kim}
\orcid{0009-0001-5037-0902}
\affKAIST
\affINEEJI
\email{soyeon.k@kaist.ac.kr}

\author{Seongwoo Lim}
\orcid{0009-0003-0982-0216}
\affINEEJI
\email{seongwoo.lim@ineeji.com}

\author{Kyowoon Lee}
\authornote{Co-corresponding authors.}
\affKAIST
\email{leekwoon@kaist.ac.kr}

\author{Jaesik Choi}
\authornotemark[1]

\orcid{0000-0002-4663-3263}
\affKAIST
\affINEEJI
\email{jaesik.choi@kaist.ac.kr}

\renewcommand{\shortauthors}{Kim et al.}

\begin{abstract}
Integrated Gradients (IG) is a widely adopted feature attribution method that satisfies desirable axiomatic properties. However, the choice of integration path significantly affects the quality of attributions, and the standard straight-line path introduces all input features simultaneously, often accumulating noisy gradients along the way. To address this limitation, we propose \emph{Spectral Integrated Gradients}, which constructs integration paths based on singular value decomposition (SVD) of the baseline-to-input difference. By progressively activating singular components from largest to smallest, SIG introduces global structure before fine-grained details, naturally following a coarse-to-fine progression. Through extensive evaluation across diverse image classification datasets, we demonstrate that SIG produces cleaner attribution maps with reduced noise and achieves improved quantitative performance compared to existing path-based attribution methods. Our code is available at \sigcode

\end{abstract}



\begin{CCSXML}
<ccs2012>
 <concept>
  <concept_id>10010147.10010257.10010293</concept_id>
  <concept_desc>Computing methodologies~Machine learning approaches</concept_desc>
  <concept_significance>500</concept_significance>
 </concept>
 <concept>
  <concept_id>10010147.10010257.10010293.10010300</concept_id>
  <concept_desc>Computing methodologies~Spectral methods</concept_desc>
  <concept_significance>300</concept_significance>
 </concept>
 <concept>
  <concept_id>10010147.10010257.10010293.10010294</concept_id>
  <concept_desc>Computing methodologies~Neural networks</concept_desc>
  <concept_significance>100</concept_significance>
 </concept>
 <concept>
  <concept_id>10010147.10010178.10010224.10010225.10010227</concept_id>
  <concept_desc>Computing methodologies~Hierarchical representations</concept_desc>
  <concept_significance>100</concept_significance>
 </concept>
</ccs2012>
\end{CCSXML}

\ccsdesc[500]{Computing methodologies~Machine learning approaches}
\ccsdesc[300]{Computing methodologies~Spectral methods}
\ccsdesc[100]{Computing methodologies~Neural networks}
\ccsdesc[100]{Computing methodologies~Hierarchical representations}

\keywords{Feature attribution, Integrated gradients, Singular value decomposition, Spectral analysis}

\received{8 February 2026}
\received[revised]{17 April 2026}
\received[accepted]{16 May 2026}

\maketitle

\begin{figure*}[ht]
    \centering
    \includegraphics[width=\textwidth]{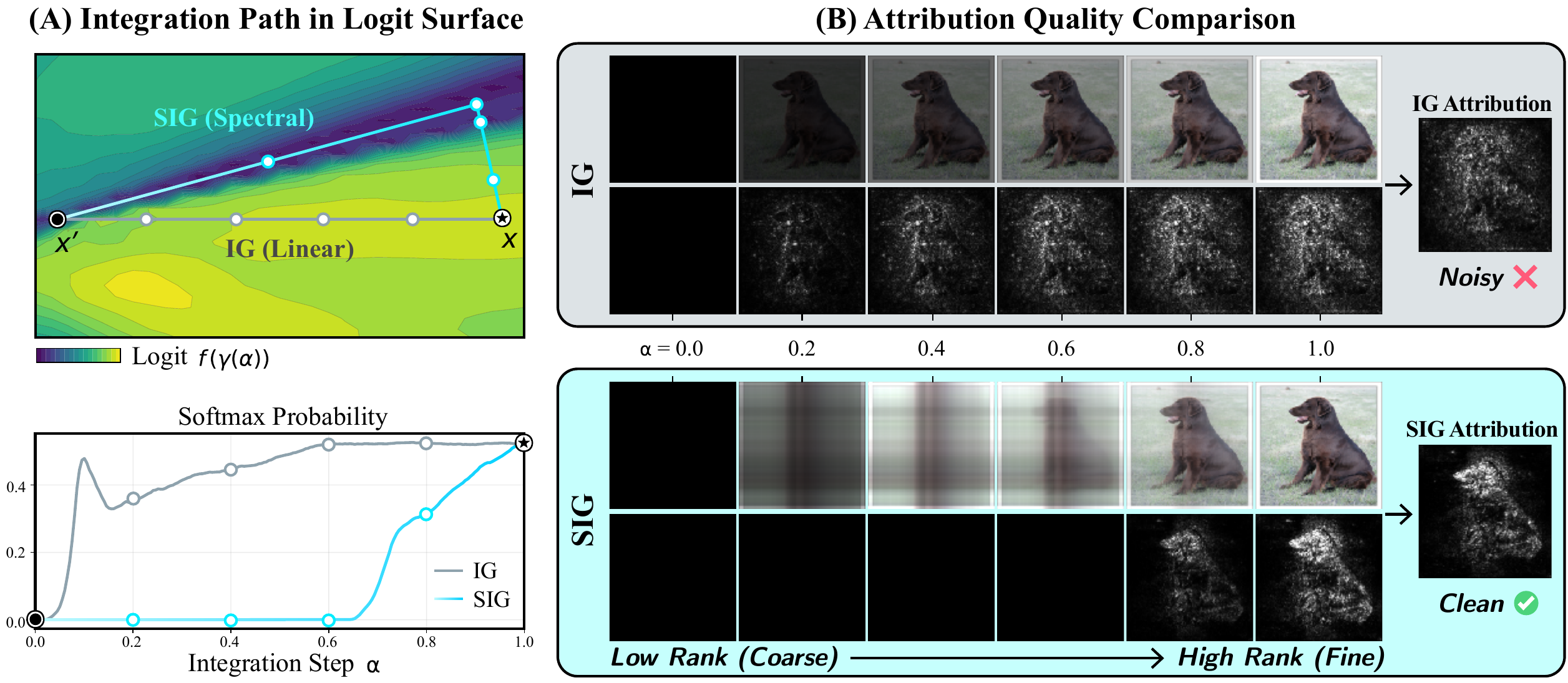}
    \caption{\textbf{Overview of Spectral Integrated Gradients (SIG).}
    \textbf{(A)} In the logit surface landscape, the linear path (gray line) of IG interpolates features uniformly, often traversing unstable regions (see Softmax Probability spikes). In contrast, SIG constructs a path (cyan line) that prioritizes coarse structural changes (low-rank) before introducing fine details (high-rank).
    \textbf{(B)} By adhering to this spectral hierarchy, SIG avoids the indiscriminate accumulation of high-frequency noise visible in IG's early steps. This effectively suppresses spurious gradients, resulting in cleaner attributions.
    Example from the ImageNet validation set classified as `Flat-Coated Retriever' (confidence 0.52) using InceptionV1.
    The integration path proceeds from baseline \protect{\raisebox{-.05cm}{\includegraphics[height=.35cm]{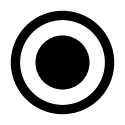}}} to input \protect{\raisebox{-.05cm}{\includegraphics[height=.35cm]{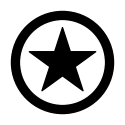}}}.}
    \Description{
    The figure illustrates the concept of Spectral Integrated Gradients (SIG) compared to standard Integrated Gradients (IG) in two panels.
    Panel (A) shows the integration path in a logit surface contour plot. The standard IG path is a straight gray line connecting the baseline x' to the input x, traversing through regions with fluctuating gradients. Below this, a line graph shows the Softmax Probability for IG fluctuating wildly across the integration steps. In contrast, the SIG path is a cyan line that avoids these noisy regions. The corresponding SIG probability graph remains low and stable initially, then rises smoothly towards the end.
    Panel (B) compares the attribution quality using an image of a dog. The top row (IG) displays images fading in linearly from black to full intensity. The resulting IG attribution map is scattered and labeled "Noisy" with a red cross. The bottom row (SIG) displays images evolving from blurry, low-rank structures to sharp, high-rank details. The resulting SIG attribution map is focused on the dog's head and body, labeled "Clean" with a green checkmark, demonstrating that prioritizing coarse structures yields better explanations.
    }
    \label{fig:sig_overview}
\end{figure*}

\section{Introduction}

Deep neural networks have achieved remarkable performance across numerous domains, yet their black-box nature poses significant challenges for deployment in high-stakes applications such as medical diagnosis \citep{chen2021deep} and autonomous systems \citep{lee2023adaptive,lee2025local,lee2025state,lee2026refining}. This inherent opacity makes it difficult to trust, debug, or ensure the safety and fairness of model decisions. To address this challenge, feature attribution methods have emerged as essential tools in explainable artificial intelligence by assigning importance scores to each input feature based on its contribution to the model output \citep{simonyan2014saliency, ribeiro2016why, lundberg2017unified, montavon2019layer, jeon2022distilled, kasmi2025one}. These methods enable practitioners to identify which input regions drive a prediction, facilitating systematic model diagnostics and error analysis~\citep{lapuschkin2019unmasking, adebayo2020debugging, anders2022finding, lee2023refining, kim2023explainable}. However, the reliability of many early attribution methods has been questioned, 
as they often rely on heuristics and have been shown to fail fundamental sanity checks \citep{adebayo2018sanity}, 
raising concerns about their potential to produce misleading explanations \citep{kindermans2019reliability, shah2021input}.

Integrated Gradients (IG) \citep{sundararajan2017axiomatic} emerged as a principled alternative grounded in the Aumann-Shapley value framework and path-based formulations \citep{aumann1974values, friedman1997paths}. Unlike heuristic approaches, IG satisfies several desirable axioms including sensitivity, implementation invariance, and completeness, providing theoretical guarantees leading to its widespread adoption. The method computes attributions by integrating gradients along a straight-line path from a neutral baseline (typically a black image) to the target input, accumulating the contribution of each feature throughout this trajectory. Due to this axiomatic rigor, IG has been widely adopted as a standard tool for feature attribution across diverse model architectures.

Despite these theoretical strengths, the practical quality of IG attributions depends critically on the choice of integration path \citep{sturmfels2020visualizing, kapishnikov2021guided}. The standard straight-line path interpolates linearly between baseline and input, treating all pixel differences uniformly throughout the integration. This uniform treatment presents a fundamental limitation: gradients are accumulated equally from all frequency components of the image difference, including high-frequency noise and fine textures that often contribute spurious attributions. As a result, the final attribution maps frequently exhibit scattered importance scores across visually irrelevant regions.

Recent works have sought to address this issue through alternative path constructions. Guided Integrated Gradients \citep{kapishnikov2021guided} greedily selects features with the smallest gradient magnitudes at each step, thereby bypassing regions with high gradient irregularity on the output surface. Manifold-based approaches such as MIG \citep{zaher2024manifold} and EIG \citep{jha2020enhanced} constrain paths to the latent space of generative models, ensuring intermediate points remain on the data distribution. While these methods improve attribution quality, they either require iterative per-step gradient computations or depend on separately trained generative models, introducing additional complexity and computational cost.

We propose a different perspective grounded in spectral analysis. Our key observation is that singular value decomposition (SVD) provides a natural ranking of the baseline-to-input difference: larger singular values capture dominant structural patterns such as object boundaries and major contrast regions, while smaller singular values encode fine-grained textures and noise. By constructing the integration path to activate large singular components before small ones, we obtain a principled ordering that prioritizes structurally important information (see Figure~\ref{fig:sig_overview} for an overview). This spectral path effectively reduces noisy gradient contributions from less significant components during the early stages of integration, yielding cleaner attribution maps. From this perspective, we introduce \emph{Spectral Integrated Gradients}, a simple yet effective method that requires only a single SVD computation per image. The method decomposes the difference between the input and the baseline into orthogonal rank-one components and progressively scales their contributions along the integration path. While the SVD path generalizes to any matrix-structured input, the coarse-to-fine bias is most beneficial when SVD's variance ordering aligns with spatially coherent semantics, as in natural images; we therefore focus our main evaluation on image classification, following the convention of prior path-based attribution work~\citep{kapishnikov2021guided, xu2020attribution, zhang2024path, pan2021explaining}.

To summarize, our contribution in this paper is the introduction of Spectral Integrated Gradients, a simple and efficient path-based attribution method that leverages singular value decomposition to construct integration paths prioritizing structurally dominant components. Extensive experiments across multiple image classification benchmarks demonstrate that Spectral IG consistently produces cleaner attribution maps with reduced noise compared to existing path-based methods. Furthermore, our method achieves improved quantitative performance while adding negligible computational overhead beyond standard Integrated Gradients.

\section{Background}
\label{sec:background}

In this section, we review the theoretical foundations of path-based attribution and introduce the spectral properties of image data that motivate our approach.

\subsection{Path-based Attribution Methods}

We consider a deep neural network classifier $f: \mathbb{R}^{H \times W} \to \mathbb{R}$ and an input image $x \in \mathbb{R}^{H \times W}$. For simplicity, we treat each color channel independently and present the formulation for a single channel; the extension to multi-channel images is straightforward. Feature attribution methods seek to assign an importance score $\mathcal{A}_i \in \mathbb{R}$ to each input feature $i$, representing its contribution to the model prediction $f(x)$ relative to a baseline $x' \in \mathbb{R}^{H \times W}$ (typically a zero matrix). For notational convenience, we flatten the spatial dimensions and write $x \in \mathbb{R}^n$ where $n = H \times W$, so that each feature corresponds to a single index $i \in \{1, \dots, n\}$.

Path-based attribution methods generalize the attribution problem by accumulating gradients along a continuous curve $\gamma: [0, 1] \to \mathbb{R}^n$ connecting the baseline to the input, such that $\gamma(0) = x'$ and $\gamma(1) = x$. The attribution for feature $i$ is given by the path integral:
\begin{equation}
\label{eq:path_integral}
\mathcal{A}_i(\gamma) = \int_{0}^{1} \frac{\partial f(\gamma(\alpha))}{\partial \gamma_i(\alpha)} \frac{\partial \gamma_i(\alpha)}{\partial \alpha} \, \mathrm{d}\alpha.
\end{equation}
This formulation satisfies several desirable axiomatic properties, such as \textit{Completeness}, which guarantees that the sum of attributions equals the difference in model output: $\sum_i \mathcal{A}_i(\gamma) = f(x) - f(x')$.

Standard Integrated Gradients (IG) \citep{sundararajan2017axiomatic} employs the simplest possible path: the straight line $\gamma(\alpha) = x' + \alpha(x - x')$. While axiomatically sound, the straight-line path assumes that all features evolve uniformly from the baseline to the input. This assumption is often suboptimal for high-dimensional image data, where the \emph{signal} (structural content) and \emph{noise} (high-frequency artifacts) may have distinct effects on the model's gradients at different scales.

\subsection{Spectral Decomposition of Visual Differences}

Digital images possess a natural hierarchy of information, often characterized by their spectral components. A common technique to analyze this hierarchy is Singular Value Decomposition (SVD). 

Let $\Delta = x - x'$ be the difference tensor between the input and the baseline. For simplicity, consider $\Delta$ as a 2D matrix of dimensions $H \times W$ (treating color channels independently or flattened). SVD decomposes this difference into a sum of rank-one matrices:
\begin{equation}
    \Delta = U \Sigma V^\top = \sum_{i=1}^{R} \sigma_i u_i v_i^\top,
\end{equation}
where $R = \min(H, W)$ is the rank, $\sigma_i$ are the singular values ordered $\sigma_1 \ge \sigma_2 \ge \dots \ge 0$, and $u_i, v_i$ are the corresponding left and right singular vectors.

This decomposition provides a principled ordering of visual information. The leading components (associated with large $\sigma_i$) capture the dominant energy, typically corresponding to global structures, shapes, and low-frequency patterns. Conversely, the trailing components (small $\sigma_i$) capture fine-grained details, high-frequency textures, and often, noise.

\section{Spectral Integrated Gradients}
\label{sec:method}

In this section, we introduce \textbf{S}pectral \textbf{I}ntegrated \textbf{G}radients (\textbf{SIG}), a framework that constructs attribution paths by respecting the spectral hierarchy of visual information. We first identify the limitations of linear interpolation through the lens of signal processing, then formulate our path construction as a constrained optimization problem, and finally present a continuous relaxation that yields a practical and effective attribution algorithm.

\subsection{Signal-to-Noise Problem in Linear Paths}
\label{subsec:signal_to_noise}

The limitations of the linear path in IG can be analyzed through this spectral lens. The Integrated Gradients method constructs a path $\gamma_{\text{IG}}(\alpha) = x' + \alpha(x - x')$ that interpolates all features uniformly. Let $\Delta = x - x'$ denote the total difference signal. The linear path scales the entire difference vector $\Delta$ by a scalar $\alpha$:
\begin{equation}
    \gamma_{\text{IG}}(\alpha) = x' + \alpha \sum_{i=1}^{R} \sigma_i u_i v_i^\top.
\end{equation}
This implies that at any step $\alpha$, the path introduces the dominant structural component ($\sigma_1 u_1 v_1^\top$) and the fine-scale detail ($\sigma_R u_R v_R^\top$) at the exact same rate.

This uniform scaling is problematic for gradient integration. Deep networks are famously sensitive to high-frequency perturbations \citep{xu2020attribution, geirhos2020shortcut}. When the integration path introduces high-frequency noise components early in the trajectory, it can trigger erratic gradients that do not reflect the semantic content of the image. These noisy gradients are then accumulated into the final attribution map, resulting in the characteristic speckled noise pattern often observed in standard IG explanations. Consequently, we propose a \textit{coarse-to-fine} integration strategy defined by a path that prioritizes the accumulation of gradients associated with robust semantic structures before introducing fine-grained details. We note that SVD is, strictly speaking, a variance/energy decomposition rather than a frequency decomposition; the coarse-to-fine interpretation is an empirical correlation that holds well for natural images (Section~\ref{sec:frequency}), and \emph{singular-component ordering} is the more precise description of what SIG controls.

\subsection{Path Construction via Spectral Optimization}

To formalize the coarse-to-fine principle, we frame path construction as an optimization problem. We aim to ensure that at any point $\alpha$ along the path, the intermediate difference $\Delta(\alpha) = \gamma(\alpha) - x'$ preserves the maximum amount of information from the target difference $\Delta$, subject to a \emph{complexity budget}.

We define complexity using the rank of the matrix representation of the difference. Let $\Delta \in \mathbb{R}^{H \times W}$ (processed per-channel) have rank $R$. We define the optimal intermediate difference $\Delta^*(\alpha)$ at step $\alpha$ as the solution to:
\begin{equation}
\label{eq:optimization}
    \Delta^*(\alpha) \in \operatorname*{argmin}_{\hat{\Delta}} \| \Delta - \hat{\Delta} \|^2 \quad \text{s.t.} \quad \operatorname{rank}(\hat{\Delta}) \le k(\alpha),
\end{equation}
where $k(\alpha)$ is a monotonically increasing rank budget function. This objective minimizes the squared reconstruction error under a strict complexity constraint. By the Eckart-Young-Mirsky theorem \cite{eckart1936approximation}, the optimal solution to Eq.~\eqref{eq:optimization} is the truncated Singular Value Decomposition (SVD) of $\Delta$. If $\Delta = \sum_{i=1}^R \sigma_i u_i v_i^\top$, then:
\begin{equation}
    \Delta^*(\alpha) = \sum_{i=1}^{k(\alpha)} \sigma_i u_i v_i^\top.
\end{equation}

This result provides a theoretical justification for using SVD: the singular components provide the optimal hierarchy for progressively revealing the image difference under a least-squares distortion metric. This defines a spectral path that activates components sequentially from the largest singular value (coarse structure) to the smallest (fine detail), as illustrated in Figure~\ref{fig:sig_overview}. In contrast, GIG uses greedy coordinate selection, BIG uses Gaussian blur, and SAMP uses stochastic sampling; to our knowledge, these alternatives do not provide an analogous rank-constrained optimality guarantee for path construction.

\paragraph{\textbf{Optimality vs.\ faithfulness.}} We emphasize that the rank-constrained optimality in Eq.~\eqref{eq:optimization} justifies the \emph{path construction criterion}, not the causal or semantic faithfulness of the resulting attributions. Whether a particular path yields more faithful attributions is an empirical question that we address through multiple complementary evaluations (perturbation-based, retraining-based, leakage-aware, and non-perturbation localization) in Section~\ref{sec:quant} and Appendix~\ref{app:road}--\ref{app:localization}. Note that this caveat is not unique to SIG: IG's straight-line path is likewise not derived from a faithfulness guarantee, and its standard axioms constrain the attribution rule rather than the path.

\subsection{Continuous Relaxation and Overlap}

\def\NoNumber#1{\STATE \textcolor{gray}{#1}}

\begin{figure}[t]
\vspace{-0.3cm}
\begin{minipage}{\linewidth}
\begin{algorithm}[H]
    \caption{\textbf{Spectral Integrated Gradients (SIG)}}
    \label{alg:sig}
    \begin{algorithmic}
    \STATE \textbf{Input:} Image $x$, baseline $x'$, model $f$, steps $M$, overlap parameter $\omega$
    \STATE \textbf{Output:} Attribution map $\mathcal{A}$
    
    \NoNumber{\small{\color{gray}\texttt{// spectral decomposition}}}
    \STATE $\Delta \gets x - x'$
    \STATE $U, \bm{\sigma}, V^\top \gets \operatorname{SVD}(\Delta)$ \hfill \COMMENT{$\bm{\sigma} = (\sigma_1, \dots, \sigma_R)$}
    
    \NoNumber{\small{\color{gray}\texttt{// path generation}}}
    \STATE $x^{(0)} \gets x'$
    
    \FOR{$m = 0 \text{ to } M-1$}
        \STATE $\alpha \gets (m+1) / M$
        
        \NoNumber{\small{\color{gray}\texttt{// compute spectral scaling factors (Eq.\ref{eq:scaling_fn})}}}
        \STATE $\tilde{\bm{\sigma}} \gets \bm{\sigma} \odot \bm{\phi}(\alpha; \omega)$ \hfill \COMMENT{$\tilde{\sigma}_i = \sigma_i \cdot \phi_i(\alpha; \omega)$}
        
        \NoNumber{\small{\color{gray}\texttt{// reconstruct partial difference and next point}}}
        \STATE $\Delta^{(m+1)} \gets U \cdot \operatorname{diag}(\tilde{\bm{\sigma}}) \cdot V^\top$
        \STATE $x^{(m+1)} \gets x' + \Delta^{(m+1)}$
    \ENDFOR
    
    \NoNumber{\small{\color{gray}\texttt{// compute path integral attributions}}}
    \STATE $\mathcal{A} \gets \sum_{m=0}^{M-1} \nabla_x f(x^{(m)}) \odot (x^{(m+1)} - x^{(m)})$
    
    \STATE \textbf{return} $\mathcal{A}$
    \end{algorithmic}
\end{algorithm}
\end{minipage}
\Description{Pseudocode for Spectral Integrated Gradients (SIG).}
\end{figure}

While the truncated SVD path is theoretically optimal for a discrete rank budget, it introduces discontinuities where singular components are abruptly activated. Such discontinuities are undesirable for numerical integration. To address this, we propose a \emph{continuous relaxation} of the hard rank constraint.

We replace the binary activation of singular values with a smooth gating function. Let $\omega \in (0, 1]$ be an \emph{overlap parameter} that controls the \emph{softness} of the transition. We define a continuous scaling function $\phi_i(\alpha; \omega)$ for the $i$-th singular component at path progress $\alpha \in [0, 1]$:
\begin{equation}
\label{eq:scaling_fn}
    \phi_i(\alpha; \omega) = \min\left(\max\left( \frac{\alpha - s_i}{e_i - s_i}, 0\right), 1 \right),
\end{equation}
where $s_i$ and $e_i$ (satisfying $0 \leq s_i < e_i \leq 1$) denote the \emph{activation start} and \emph{full activation} thresholds for component $i$, respectively. These are determined by the relative spectral importance:
\begin{equation}
    s_i = (1-\omega)\left(1 - \frac{\sigma_i}{\sigma_1}\right), \quad e_i = s_i + \omega.
\end{equation}
Intuitively, $s_i$ is the path progress at which component $i$ begins to contribute, and $e_i$ is the threshold at which it reaches full contribution ($\phi_i = 1$). Larger singular values (closer to $\sigma_1$) have smaller $s_i$, causing them to activate earlier in the path. The overlap parameter $\omega$ controls how much the activation windows of different components overlap: larger $\omega$ leads to more simultaneous activation, while smaller $\omega$ enforces stricter sequential ordering.
The Spectral Integrated Gradients path is then defined as:
\begin{equation}
    \gamma_{\text{SIG}}(\alpha) = x' + \sum_{i=1}^{R} \phi_i(\alpha; \omega) \sigma_i u_i v_i^\top.
\end{equation}

This formulation positions $\omega$ as a mechanism to interpolate between distinct path behaviors. In the limit as $\omega \to 0$, SIG approximates the truncated SVD path, strictly adhering to a sequential coarse-to-fine spectral order. Conversely, as $\omega \to 1$, the start times $s_i$ approach zero for all components, causing them to scale linearly with $\alpha$. Consequently, in this limit, SIG recovers the standard linear IG path. In practice, we use $\omega = 0.4$ to ensure path continuity while preserving spectral ordering. The complete procedure for Spectral Integrated Gradients is summarized in Algorithm~\ref{alg:sig}.

\subsection{Axiomatic Properties of Spectral IG}
\label{subsec:axioms}

As a path-based attribution method, Spectral Integrated Gradients (SIG) is grounded in the axiomatic framework established by \cite{sundararajan2017axiomatic}. By defining the attribution as a path integral along a continuous trajectory $\gamma_{\text{SIG}}$ that satisfies the boundary conditions $\gamma(0)=x'$ and $\gamma(1)=x$, SIG inherently satisfies several fundamental axioms.

First, \emph{Completeness}, which ensures that the sum of attributions equals $f(x) - f(x')$, is guaranteed by the fundamental theorem of calculus for line integrals \cite{sundararajan2017axiomatic}. Second, \emph{Sensitivity-b} is satisfied because the partial derivative with respect to any non-functional feature is identically zero everywhere along the path, causing the attribution integral to vanish. Third, \emph{Implementation Invariance} holds because the integration path is derived solely from the SVD of the input difference $\Delta$. Since gradients are invariant for functionally equivalent models, the resulting path integrals are identical regardless of the specific network architecture. Inheriting these properties from the IG framework is a deliberate design choice: by changing only the path, SIG preserves IG's full axiomatic guarantees while improving practical attribution quality. Finally, we formally establish that SIG preserves the \emph{symmetry} of input features. This property is not automatically guaranteed by adaptive path methods such as AGI or IG$^2$, but is preserved here due to the geometric properties of the Singular Value Decomposition.

\begin{proposition}[Geometric Symmetry Preservation]
\label{prop:geometry}
Let $T$ be a permutation matrix representing a discrete spatial operation (e.g., $90^\circ$-multiple rotation or reflection).
Let $f:\mathbb{R}^n\to\mathbb{R}$ satisfy $f(Tx)=f(x)$ for all $x$.
Then SIG is equivariant under $T$:
\[
\mathcal{A}_{SIG}(Tx',Tx)=T\,\mathcal{A}_{SIG}(x',x).
\]
\end{proposition}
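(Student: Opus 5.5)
The plan is to show that the SIG path itself is equivariant, $\gamma_{\text{SIG}}(\alpha; Tx', Tx) = T\,\gamma_{\text{SIG}}(\alpha; x', x)$ for every $\alpha\in[0,1]$. Then I combine this with the way the gradient of a $T$-invariant function transforms.

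\textbf{Step 1 (how $T$ acts on the matrix).} I first make explicit how $T$ acts on the difference matrix. A reflection acts on $\Delta \in \mathbb{R}^{H\times W}$ as $\Delta \mapsto P\Delta Q$, where $P$ and $Q$ are permutation matrices (row reversal or column reversal). A $90^\circ$-multiple rotation acts as $\Delta \mapsto P\Delta^\top Q$ or $P\Delta Q$, i.e.\ possibly with a transpose. For a non-square image a $90^\circ$ rotation changes the shape; the argument is unchanged, since $\min(H,W)$ is symmetric. This restriction is genuinely needed: an arbitrary pixel permutation of the flattened vector would not preserve the SVD. So I read "discrete spatial operation" as exactly these row, column and transpose maps. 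Since $T$ is linear, $Tx - Tx' = T\Delta$.

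\textbf{Step 2 (SVD of the transformed difference).} If $\Delta = \sum_i \sigma_i u_i v_i^\top$, then
\[
P\Delta Q = \sum_i \sigma_i (Pu_i)(Q^\top v_i)^\top .
\]
This is again a valid SVD, because $P$ and $Q^\top$ are orthogonal. The singular values are identical and each rank-one term maps as $u_iv_i^\top \mapsto P(u_iv_i^\top)Q$. In the transposed case the roles of $u_i$ and $v_i$ swap and the same conclusion holds termwise. The gating $\phi_i(\alpha;\omega)$ depends only on $\sigma_i/\sigma_1$ and $\omega$, so it is unchanged. Therefore
\[
\sum_i \phi_i\,\sigma_i\, T(u_iv_i^\top) = T\sum_i \phi_i\,\sigma_i\, u_iv_i^\top ,
\]
which gives $\gamma_{\text{SIG}}(\alpha;Tx',Tx) = T\gamma_{\text{SIG}}(\alpha;x',x)$.

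\textbf{Main obstacle: non-uniqueness of the SVD.} The computed singular vectors for $T\Delta$ need not be $Pu_i$ and $Q^\top v_i$. I handle this by grouping components with equal singular values. For a singular value $\sigma$ of multiplicity greater than one, the sum $\sum_{i:\sigma_i=\sigma} u_iv_i^\top$ is basis-independent: it equals the partial isometry $\sigma^{-1}\Pi_L \Delta\,\Pi_R$, where $\Pi_L$ and $\Pi_R$ are the orthogonal projectors onto the corresponding left and right singular subspaces. The gate $\phi_i$ is constant within such a group. Hence the path is a well-defined function of $\Delta$, independent of the chosen decomposition (sign flips of $u_i,v_i$ are a special case). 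This makes the termwise argument of Step 2 legitimate.

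\textbf{Step 3 (gradient transformation).} From $f(Tz)=f(z)$ and the chain rule, $T^\top \nabla f(Tz) = \nabla f(z)$. Since $T$ is a permutation and hence orthogonal, this gives $\nabla f(Tz) = T\nabla f(z)$.

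\textbf{Step 4 (attributions).} Write $\tilde\gamma = T\gamma$ for the transformed path, so $\dot{\tilde\gamma} = T\dot\gamma$. The integrand of Eq.~\eqref{eq:path_integral} becomes
\[
\nabla f(T\gamma)\odot T\dot\gamma = (T\nabla f(\gamma))\odot(T\dot\gamma) = T\bigl(\nabla f(\gamma)\odot\dot\gamma\bigr),
\]
because a permutation commutes with the Hadamard product. Integrating over $\alpha$ yields $\mathcal{A}_{SIG}(Tx',Tx) = T\mathcal{A}_{SIG}(x',x)$.

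The same identity holds verbatim for the Riemann sum in Algorithm~\ref{alg:sig}, because the discretization points $x^{(m)}$ transform identically. The multi-channel case follows by applying the argument channel by channel.
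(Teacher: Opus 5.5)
Your proof is correct and follows the same route as the paper's: the singular values, and hence the gates $\phi_i$, are unchanged, so the SIG path is equivariant; this is combined with $\nabla f(Tz)=T\nabla f(z)$ and the fact that a permutation commutes with the Hadamard product. Your version is more careful than the paper's in two places: the paper writes $T\Delta=(TU)\Sigma V^\top$, which read literally only covers left multiplication by a row permutation (not column flips or transposing rotations, and not arbitrary pixel permutations, which as you note would break the SVD), and it takes an arbitrary SVD without showing the path is independent of that choice, both of which your $P\Delta Q$ / $P\Delta^\top Q$ modelling and grouping of equal singular values fix (terms with $\sigma_i=0$ vanish anyway, so your $\sigma^{-1}\Pi_L\Delta\Pi_R$ formula is only needed for $\sigma>0$).
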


\begin{sproof}
The proof follows from the geometric properties of the SVD and the Hadamard product. First, since $T$ is a permutation matrix (hence orthogonal), the SVD of the transformed difference $\Delta_T = T\Delta$ preserves the singular values $\Sigma$ while spatially permuting the singular vectors. This implies the SIG path transforms equivariantly: $\gamma_T(\alpha) = T\gamma(\alpha)$.
Second, the symmetry of $f$ implies gradient equivariance: $\nabla f(Tx) = T\nabla f(x)$.
Finally, a key property of permutation matrices is that they distribute over the element-wise product: $(Ta)\odot(Tb) = T(a\odot b)$. Combining these, the transformation $T$ factors out of the attribution integral.
We refer the reader to Appendix~\ref{appendix:proofs} for the detailed formal derivation.
\end{sproof}

Geometric Symmetry Preservation is not automatically guaranteed by adaptive path methods, since a data-dependent path construction can break the equivariance between transformed inputs and their attributions. In SIG, because singular values are invariant under orthogonal transformations, the spectral scaling schedule remains unchanged when the input is transformed, and the transformation factors out of the attribution integral. In practice, this means that when a classifier is invariant to a spatial transformation, the attribution map transforms accordingly, ensuring spatially consistent explanations.



\section{Experiments}
\label{sec:experiments}

In this section, we empirically validate the effectiveness of Spectral Integrated Gradients. We investigate \textbf{(1)} whether Spectral IG produces qualitatively more human-perceptually aligned attributions, \textbf{(2)} whether Spectral IG achieves improved quantitative performance compared to existing attribution methods, and \textbf{(3)} the sensitivity of the method to hyperparameter choices.

\subsection{Experimental Setup}

\paragraph{\textbf{Datasets and Models.}}
We conduct experiments on three image classification benchmarks with varying characteristics: (1) \textbf{ImageNet}~\citep{deng2009imagenet}, the standard large-scale benchmark with 1,000 object categories; (2) \textbf{Oxford-IIIT Pet}~\citep{parkhi2012cats}, a fine-grained dataset containing 37 pet breeds where subtle visual features distinguish similar classes; and (3) \textbf{Oxford 102 Flower}~\citep{nilsback2008automated}, comprising 102 flower species with diverse scales, poses, and backgrounds.
For classifiers, we use VGG16~\citep{simonyan2015very}, ResNet18~\citep{he2016deep}, and InceptionV1~\citep{szegedy2015going}, all initialized with ImageNet-pretrained weights. Models for the Oxford datasets are fine-tuned on their respective training splits.

\paragraph{\textbf{Baselines.}}

We compare SIG against a comprehensive set of attribution methods. We include G$\times$I~\citep{shrikumar2016not} as a gradient-based baseline. For path-based approaches, we evaluate the foundational IG~\citep{sundararajan2017axiomatic} and distinct path construction strategies: adaptive methods including GIG~\citep{kapishnikov2021guided} which greedily avoids high-curvature regions, IG$^2$~\citep{zhuo2024ig2} which seeks steeper gradients to avoid saturation, AGI~\citep{pan2021explaining} which integrates along steepest-ascent paths from adversarial examples, BIG~\citep{xu2020attribution} which integrates along a Gaussian deblurring path satisfying scale-space axioms, and SAMP~\citep{zhang2024path} which optimizes paths under a concentration principle via stochastic search; and manifold-based approaches EIG~\citep{jha2020enhanced} and MIG~\citep{zaher2024manifold}, which employ linear interpolation and geodesic paths within the VAE latent space, respectively.

\paragraph{\textbf{Evaluation Metrics.}}

We employ two complementary metrics to evaluate attribution faithfulness. Difference between Insertion and Deletion Games (\textbf{DiffID})~\citep{yang2022re, yang2023local} builds on the Insertion and Deletion protocols~\citep{petsiuk2018rise}, which track how model confidence changes as pixels are progressively added or removed according to their attributed importance. Rather than evaluating each curve independently, DiffID takes their difference, canceling out distributional biases that arise from modifying input images. 
RemOve And Retrain (\textbf{ROAR})~\citep{hooker2019benchmark} provides a global evaluation by retraining the model on data where top-$k$\% attributed pixels are replaced with mean values, then retraining the model from random initialization on the modified data. This retraining step is essential to decouple the effect of distribution shift from genuine information removal, ensuring the model learns to treat replacement values as uninformative. Lower accuracy after retraining indicates more accurate identification of discriminative features.

\begin{figure}[!t]
    \centering
    \includegraphics[width=\linewidth]{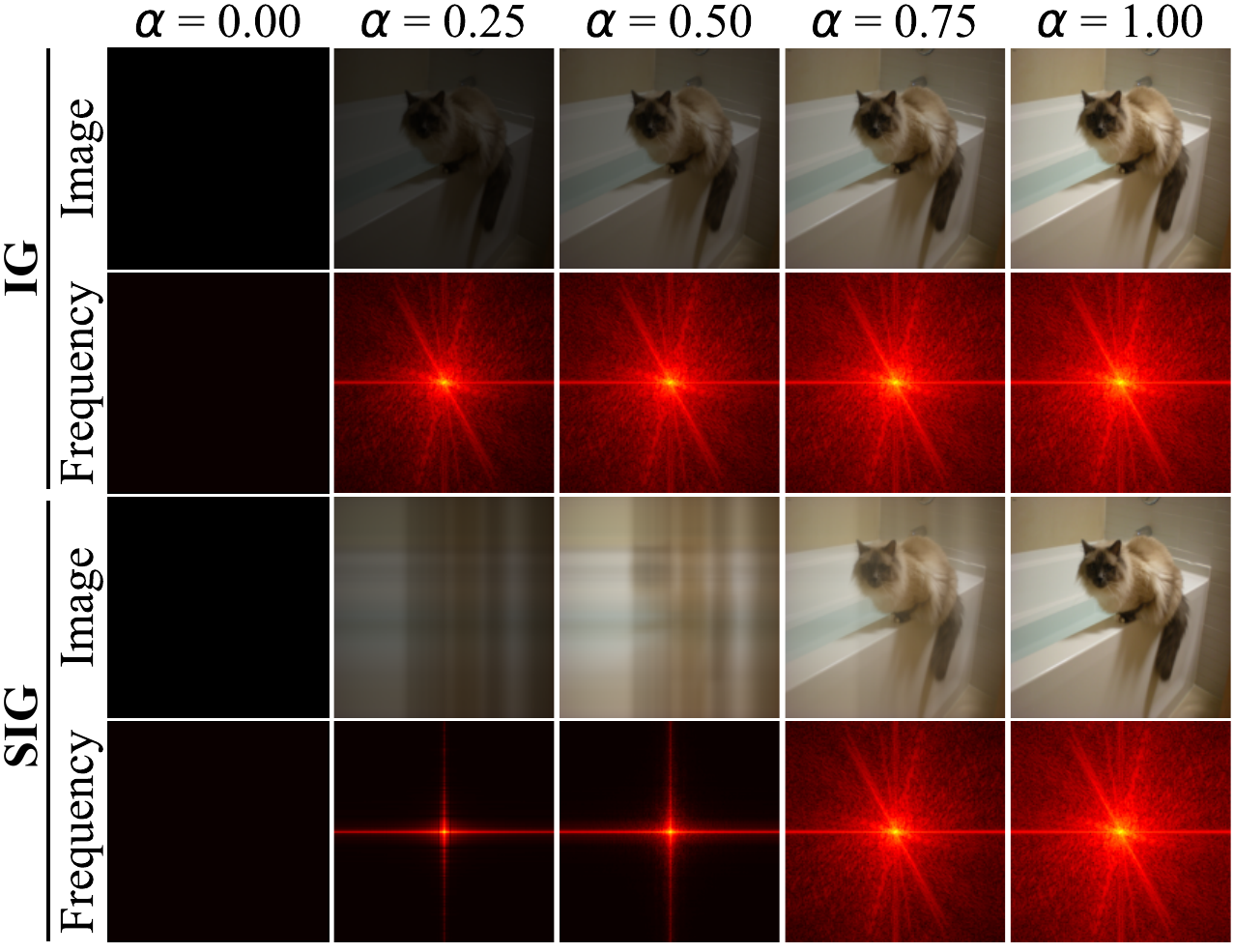}
    \caption{\textbf{Frequency-domain analysis of integration paths.} Rows 1--2 show Standard IG (linear interpolation), and rows 3--4 show SIG (spectral interpolation). For each method, we visualize the interpolated image and the 2D Fourier magnitude of the baseline difference $\Delta(\alpha)=\gamma(\alpha)-x'$. IG introduces frequency components uniformly, whereas SIG follows a coarse-to-fine trajectory: low-frequency structure appears first, followed by high-frequency detail and noise.}
    \label{fig:frequency}
    \Description{The figure displays four rows of images sampled at five integration steps (alpha equals 0, 0.25, 0.5, 0.75, and 1). The top two rows correspond to standard IG: the first row shows interpolated images fading in uniformly from black to full intensity, and the second row shows the 2D Fourier magnitude spectrum of the difference from baseline, where all frequency components appear simultaneously at every step. The bottom two rows correspond to SIG: the first row shows images evolving from blurry low frequency structures to sharp details, and the second row shows the Fourier spectrum gradually expanding from a concentrated low frequency center outward to high frequency components as alpha increases.}
\end{figure}

\paragraph{\textbf{Implementation Details.}}
For a comprehensive description of hyperparameters and experimental configurations, please refer to Appendix~\ref{app:setup}. 
We employed VGG16, ResNet18, and InceptionV1 classifiers initialized with ImageNet-pretrained weights from the Torchvision library~\citep{torchvision2016}. For the Oxford-IIIT Pet and Oxford 102 Flower benchmarks, the models were fine-tuned on their respective training splits. 
To ensure fair comparison, we standardized the number of integration steps to $M = 200$ for most path-based methods. An exception is AGI, for which we employed 15 iterations following the official implementation settings~\citep{pan2021explaining}. 
For SIG, the overlap parameter was set to $\omega = 0.4$. 
Regarding the baseline, we followed standard practice~\citep{sundararajan2017axiomatic} by using a black image (zero tensor) for IG, GIG, EIG, MIG, and SIG. 
However, methods with distinct path formulations deviate from this setting: BIG integrates from a blurred baseline, AGI utilizes adversarial starting points, and $IG^2$ employs a random reference mode. 
The implementation code for SIG is provided in Appendix~\ref{app:code}.

\subsection{Frequency-Domain Analysis}

\begin{figure}[!t]
    \centering
    \includegraphics[width=\linewidth]{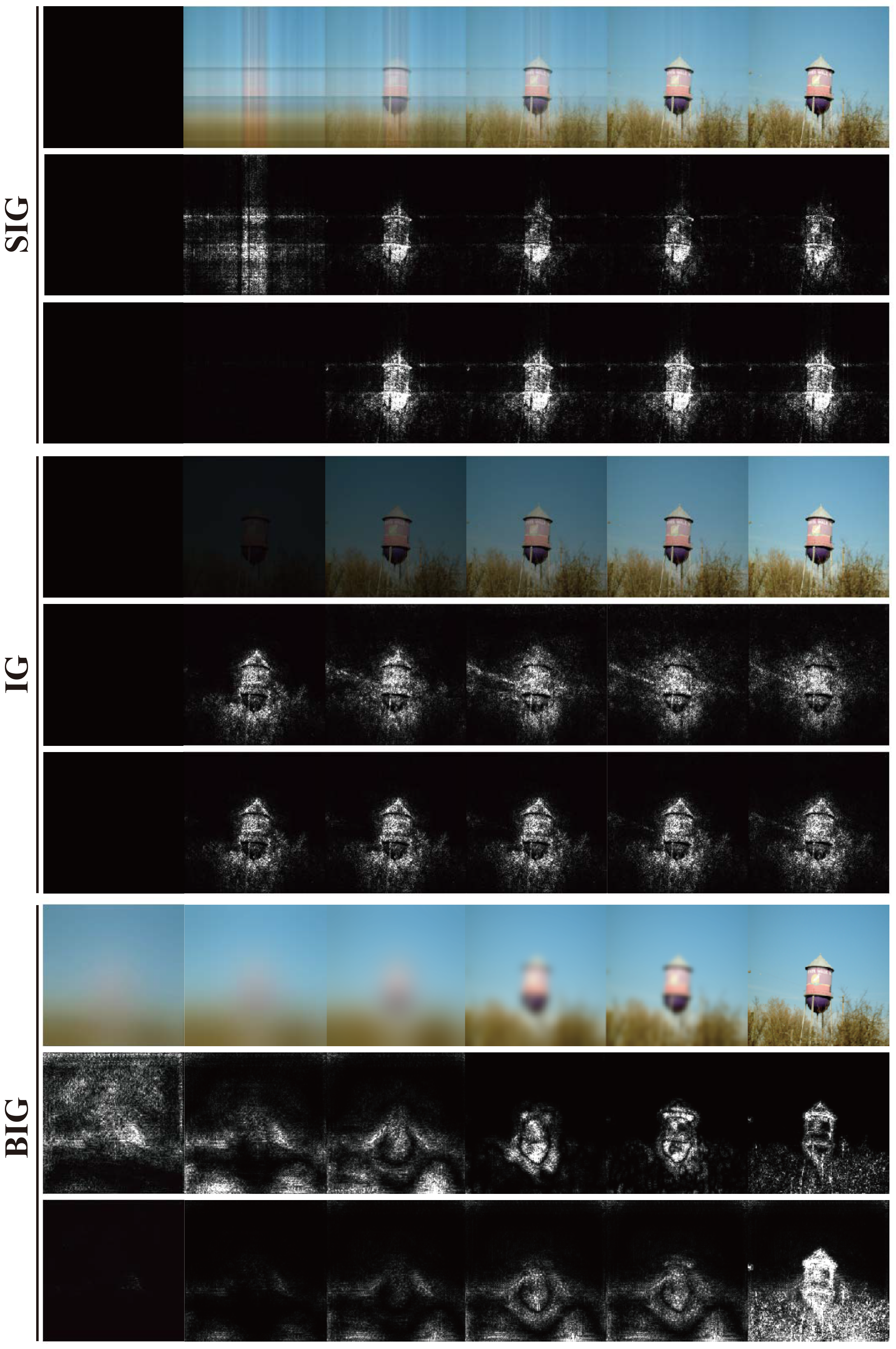}
    \caption{Path analysis on ImageNet using the VGG16 classifier. The predicted class is `water tower' with a confidence of 0.963. The first, second, and third rows represent the path features $x^{(m)}$, the gradients multiplied by the path difference $|\nabla_x f(x^{(m)}) \odot (x^{(m+1)} - x^{(m)})|$, and the aggregated attribution $|\mathcal{A}^{(m)}|$ at step $m$.}
    \label{fig:path_vgg}
    \Description{The figure compares three attribution methods (IG, BIG, and SIG) on an ImageNet image classified as water tower. Each block consists of three rows: interpolated path images, per step gradient contributions, and cumulative attribution maps, displayed across 10 evenly spaced integration steps. In the IG block, gradient contributions appear noisy and scattered throughout all steps, producing a diffuse final attribution map. In the BIG block, gradients concentrate along object boundaries, resulting in an edge dominated attribution. In the SIG block, gradient contributions remain suppressed during early steps and emerge only after coarse structure is established, yielding an attribution map that is focused on the semantically relevant regions of the water tower.}
\end{figure}

\begin{figure*}[t]
    \centering
    \includegraphics[width=\linewidth]{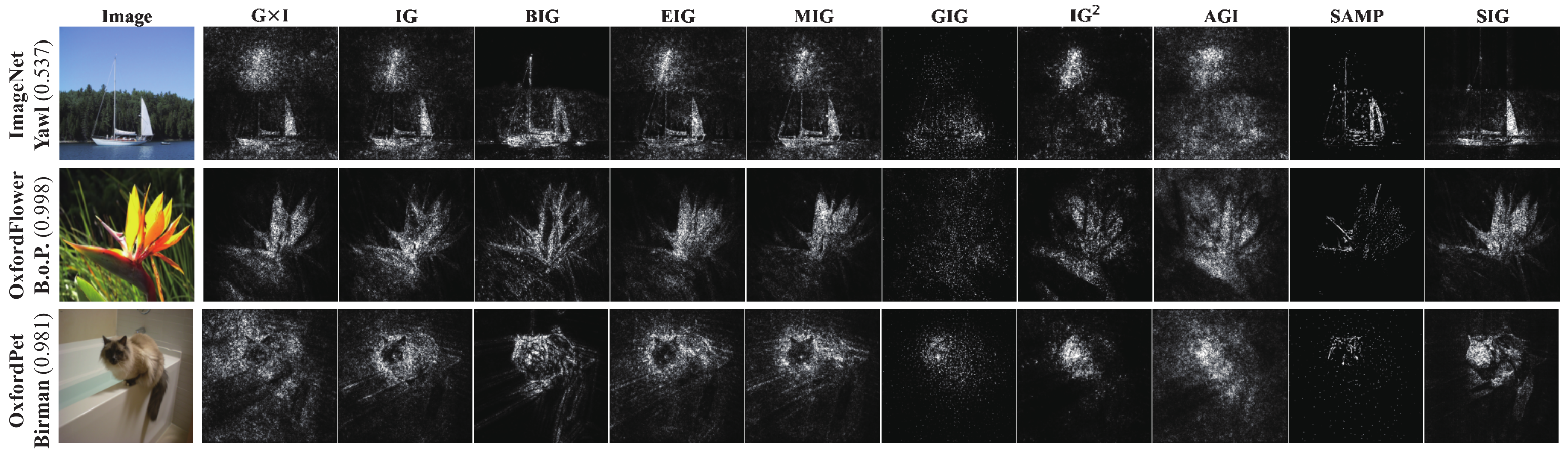}
    \caption{\textbf{Qualitative comparison of attribution maps on ImageNet (InceptionV1), Oxford-IIIT Pet (ResNet18), and Oxford 102 Flower (VGG16) against baselines.} Labels indicate predicted classes, and numbers in brackets denote prediction confidence. (B.o.P.: Bird of Paradise)}
    \Description{
    A figure displaying a grid of qualitative attribution results with three rows and eleven columns. 
    The rows represent three different input samples: a 'Yawl' from ImageNet, a 'Bird of Paradise' flower from OxfordFlower, and a 'Birman' cat from OxfordPet. 
    The columns compare the original input image against ten different attribution methods: GxI, IG, BIG, EIG, MIG, GIG, IG-squared, AGI, SAMP, and the proposed SIG.
    Visually, baseline methods like IG, GxI, and BIG exhibit significant background noise; for instance, in the boat image, they highlight the sky and water with scattered white pixels. 
    The SAMP method produces attributions that appear sharp but disjointed, resembling 'salt-and-pepper' noise or isolated speckles rather than continuous features. 
    In contrast, the proposed SIG method in the final column produces the cleanest and sparsest attribution maps. 
    SIG suppresses background clutter (such as the wall behind the cat or the sky above the boat) and focuses intensity on the core structural elements of the objects, such as the boat's mast, the flower's petals, and the cat's face.
    }
    \label{fig:qual}
\end{figure*}

\begin{figure}[ht]
    \centering
    \includegraphics[width=\linewidth]{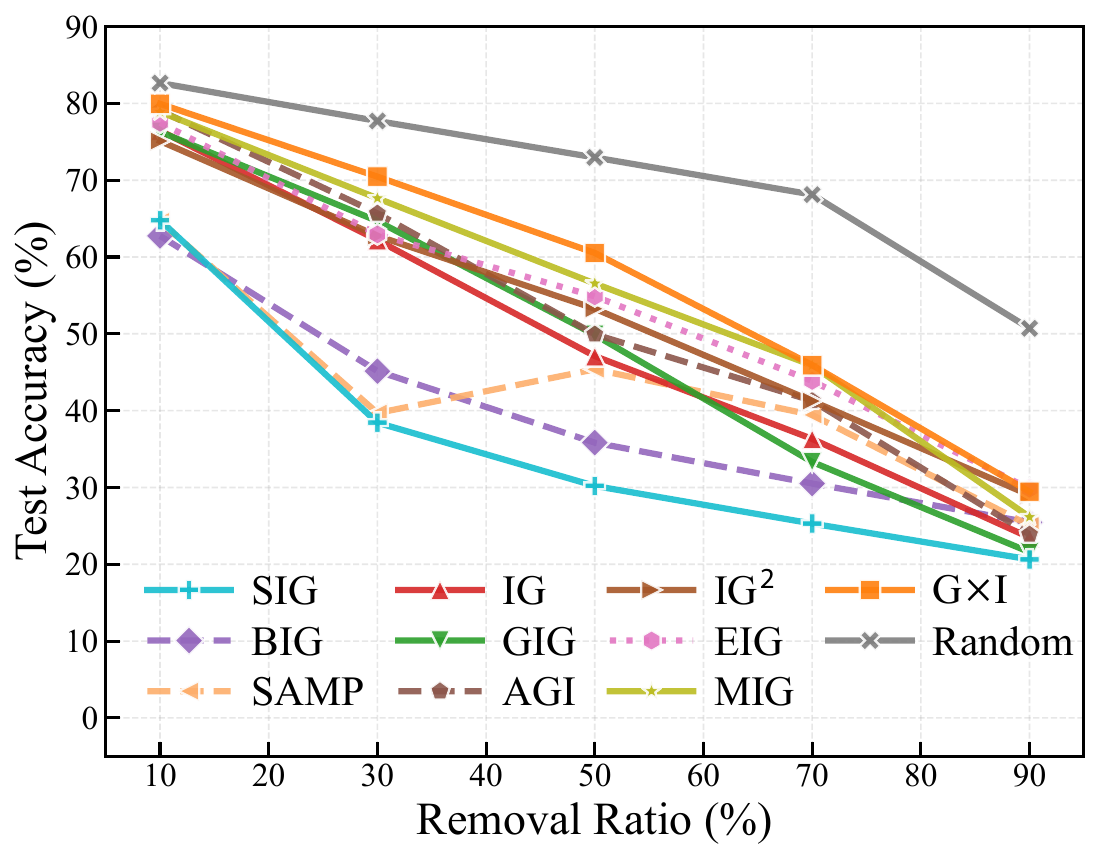}
    \caption{\textbf{RemOve And Retrain (ROAR) test on CIFAR10.} SIG (cyan line) achieves the lowest area under the curve (AUC) among all compared methods.}
    \Description{A line chart plotting test accuracy (y-axis) against the percentage of top-attributed pixels removed (x-axis) for the ROAR evaluation on CIFAR-10. Each line represents a different attribution method. As more pixels identified as important are removed and the model is retrained, accuracy decreases for all methods. SIG (cyan) exhibits the steepest accuracy drop and the lowest overall area under the curve, indicating that SIG most accurately identifies discriminative features. Other methods such as IG, GIG, and BIG show comparatively slower accuracy degradation, suggesting less precise feature identification.}
    \label{fig:roar}
\end{figure}

\label{sec:frequency}

To examine the spectral behavior of SIG, we analyze how frequency content evolves along the integration path. At each step $\alpha \in \{0, 0.25, 0.5, 0.75, 1\}$, we compute the 2D Discrete Fourier Transform (DFT) of the difference from baseline $\Delta(\alpha) = \gamma(\alpha) - x'$ and visualize the log-scaled magnitude spectrum. 
As illustrated in \Cref{fig:frequency}, although SVD is strictly a variance-based decomposition rather than a frequency-based one, our analysis reveals an empirical correlation: the SIG path exhibits a distinct coarse-to-fine spectral progression. Dominant low-frequency structures emerge during early steps of $\alpha$, while high-frequency details, which often encompass noise, are progressively introduced only at later stages. This ordering enables SIG to prioritize robust structural features for attribution while effectively suppressing the accumulation of high-frequency noise (see Appendix~\ref{app:freq} for additional examples).

\subsection{Qualitative Comparison}

We qualitatively compare the attribution maps generated by SIG against various baseline methods in \Cref{fig:qual}. Overall, SIG consistently produces attribution maps that are visibly less noisy and more focused on feature areas relevant to the predicted class. While standard gradient-based methods often assign significant importance to irrelevant edges or background textures, SIG effectively suppresses these spurious signals, generating sparse attributions that minimize clutter. Furthermore, unlike methods such as SAMP, which tend to produce speckled attributions with isolated spikes in pixel importance, SIG allocates attribution coherently across the relevant feature area, preserving structural integrity. This superior performance stems directly from our coarse-to-fine integration mechanism. By prioritizing the aggregation of gradients associated with dominant structural components, SIG prevents the accumulation of erratic gradients that arise when fine-scale details are introduced too early along the integration path.

\subsection{Path Analysis}
\label{sec:path}

\begin{table*}[ht]
\centering
\caption{\textbf{Quantitative comparison on Oxford-IIIT Pet, Oxford 102 Flower, and ImageNet datasets using ResNet18, VGG16, and InceptionV3 classifiers.} Best results are highlighted in \textbf{Bold}, and second-best results are \underline{underlined}.}
\label{tab:quant}
\setlength{\tabcolsep}{7.5pt}
\begin{tabular}{ll rrr rrr rrr rrr}
\toprule
& & \multicolumn{3}{c}{\textbf{ResNet18}} & \multicolumn{3}{c}{\textbf{VGG16}} & \multicolumn{3}{c}{\textbf{InceptionV3}} \\
\cmidrule(lr){3-5} \cmidrule(lr){6-8} \cmidrule(lr){9-11} 
\textbf{Data} & \textbf{Method} & \textbf{DiffID ($\uparrow$)} & \textbf{Ins ($\uparrow$)} & \textbf{Del ($\downarrow$)} & \textbf{DiffID ($\uparrow$)} & \textbf{Ins ($\uparrow$)} & \textbf{Del ($\downarrow$)} & \textbf{DiffID ($\uparrow$)} & \textbf{Ins ($\uparrow$)} & \textbf{Del ($\downarrow$)} \\
\midrule
\cellcolor{white} & \textbf{G $\times$ I} \cite{shrikumar2016not}    & 0.2384 & 0.4378 & 0.1994 & 0.4060 & 0.5174 & 0.1114 & 0.2255 & 0.3940 & 0.1685 \\ 
\rowcolor{palegray}
\cellcolor{white} & \textbf{IG} \cite{sundararajan2017axiomatic}     & 0.3790 & 0.5186 & 0.1396 & 0.5258 & 0.6060 & 0.0802 & 0.3438 & 0.4748 & 0.1309 \\ 
\cellcolor{white} & \textbf{BIG} \cite{xu2020attribution}            & \textbf{0.5462} & \underline{0.6228} & \underline{0.0766} & \underline{0.6252} & \underline{0.6910} & \textbf{0.0658} & \underline{0.5246} & \underline{0.5922} & \textbf{0.0676} \\ 
\rowcolor{palegray}
\cellcolor{white} & \textbf{EIG} \cite{jha2020enhanced}              & 0.3417 & 0.4946 & 0.1529 & 0.4853 & 0.5751 & 0.0898 & 0.3228 & 0.4646 & 0.1417 \\ 
\cellcolor{white} & \textbf{MIG} \cite{zaher2024manifold}            & 0.3586 & 0.4982 & 0.1396 & 0.5009 & 0.5829 & 0.0820 & 0.3273 & 0.4652 & 0.1378 \\ 
\rowcolor{palegray}
\cellcolor{white} & \textbf{GIG} \cite{kapishnikov2021guided}        & 0.3634 & 0.5093 & 0.1459 & 0.4709 & 0.5550 & 0.0841 & 0.3586 & 0.4880 & 0.1294 \\ 
\cellcolor{white} & \textbf{AGI} \cite{pan2021explaining}            & 0.2787 & 0.4453 & 0.1667 & 0.4447 & 0.5345 & 0.0898 & 0.3381 & 0.4589 & 0.1207 \\ 
\rowcolor{palegray}
\cellcolor{white} & \textbf{IG$^2$} \cite{zhuo2024ig2}               & 0.3823 & 0.5264 & 0.1441 & 0.6060 & 0.6832 & 0.0772 & 0.4273 & 0.5315 & 0.1042 \\ 
\cellcolor{white} & \textbf{SAMP} \cite{zhang2024path}               & 0.4655 & 0.5315 & \textbf{0.0661} & 0.5357 & 0.6162 & 0.0805 & 0.4751 & 0.5655 & 0.0904 \\ 
\cmidrule(lr){2-11}
\rowcolor{lightgray}
    \multirow{-11}{*}{\rotatebox[origin=c]{90}{\textbf{Oxford-IIIT Pet}}} 
\cellcolor{white} & \textbf{SIG (ours)}                             & \underline{0.5210} & \textbf{0.6237} & 0.1027 & \textbf{0.6532} & \textbf{0.7270} & \underline{0.0739} & \textbf{0.5330} & \textbf{0.6114} & \underline{0.0784} \\ 

\midrule
\cellcolor{white} & \textbf{G $\times$ I} \cite{shrikumar2016not}    & 0.1222 & 0.2338 & 0.1116 & 0.2784 & 0.3576 & 0.0791 & 0.2000 & 0.2813 & 0.0813 \\ 
\rowcolor{palegray}
\cellcolor{white} & \textbf{IG} \cite{sundararajan2017axiomatic}     & 0.1769 & 0.2740 & 0.0971 & 0.3178 & 0.3847 & 0.0669 & 0.2551 & 0.3247 & 0.0696 \\ 
\cellcolor{white} & \textbf{BIG} \cite{xu2020attribution}            & \underline{0.2629} & \underline{0.3131} & \textbf{0.0502} & 0.3116 & 0.3711 & \textbf{0.0596} & 0.2556 & 0.3140 & \underline{0.0584} \\ 
\rowcolor{palegray}
\cellcolor{white} & \textbf{EIG} \cite{jha2020enhanced}              & 0.1871 & 0.2842 & 0.0971 & \underline{0.3218} & \textbf{0.3860} & 0.0642 & 0.2553 & 0.3249 & 0.0696 \\ 
\cellcolor{white} & \textbf{MIG} \cite{zaher2024manifold}            & 0.1738 & 0.2729 & 0.0991 & 0.3076 & 0.3751 & 0.0676 & 0.2531 & 0.3260 & 0.0729 \\ 
\rowcolor{palegray}
\cellcolor{white} & \textbf{GIG} \cite{kapishnikov2021guided}        & 0.1891 & 0.272 & 0.0829 & 0.2587 & 0.3449 & 0.0862 & 0.2611 & \textbf{0.3413} & 0.0802  \\
\cellcolor{white} & \textbf{AGI} \cite{pan2021explaining}            & 0.0136 & 0.1569 & 0.1433 & 0.1409 & 0.2689 & 0.1280 & 0.0787 & 0.1947 & 0.1160 \\ 
\rowcolor{palegray}
\cellcolor{white} & \textbf{IG$^2$} \cite{zhuo2024ig2}               & 0.0193 & 0.1713 & 0.1520 & 0.2189 & 0.3293 & 0.1104 & 0.0816 & 0.2053 & 0.1238 \\ 
\cellcolor{white} & \textbf{SAMP} \cite{zhang2024path}               & \textbf{0.2751} & \textbf{0.3498} & 0.0747 & 0.2869 & 0.3638 & 0.0769 & \underline{0.2662} & \underline{0.3380} & 0.0718 \\ 
\cmidrule(lr){2-11}
\rowcolor{lightgray}
    \multirow{-11}{*}{\rotatebox[origin=c]{90}{\textbf{Oxford 102 Flower}}} 
\cellcolor{white} & \textbf{SIG (ours)}                            & 0.2533 & 0.3116 & \underline{0.0582} & \textbf{0.3224} & \underline{0.3856} & \underline{0.0631} & \textbf{0.2822} & 0.3316 & \textbf{0.0493} \\ 

\midrule
\cellcolor{white} & \textbf{G $\times$ I} \cite{shrikumar2016not}    & 0.1038 & 0.2278 & 0.1240 & 0.1882 & 0.2749 & 0.0867 & 0.1380 & 0.2776 & 0.1396 \\ 
\rowcolor{palegray}
\cellcolor{white} & \textbf{IG} \cite{sundararajan2017axiomatic}     & 0.1589 & 0.2560 & 0.0971 & 0.2404 & 0.3093 & 0.0689 & 0.1916 & 0.3073 & 0.1158 \\ 
\cellcolor{white} & \textbf{BIG} \cite{xu2020attribution}            & \underline{0.3122} & \underline{0.3504} & \underline{0.0382} & \underline{0.3258} & 0.3549 & \textbf{0.0291} & \underline{0.3473} & \underline{0.3956} & \underline{0.0482} \\ 
\rowcolor{palegray}
\cellcolor{white} & \textbf{EIG} \cite{jha2020enhanced}              & 0.1500 & 0.2533 & 0.1033 & 0.2327 & 0.3047 & 0.0720 & 0.1911 & 0.3091 & 0.1180 \\ 
\cellcolor{white} & \textbf{MIG} \cite{zaher2024manifold}            & 0.1596 & 0.2556 & 0.096 & 0.2349 & 0.3040 & 0.0691 & 0.1940 & 0.3104 & 0.1164 \\ 

\rowcolor{palegray}
\cellcolor{white} & \textbf{GIG} \cite{kapishnikov2021guided}        & 0.2507 & 0.3142 & 0.0636 & 0.2813 & 0.3302 & 0.0489 & 0.3027 & 0.3789 & 0.0762 \\ 
\cellcolor{white} & \textbf{AGI} \cite{pan2021explaining}            & 0.1640 & 0.2553 & 0.0913 & 0.2260 & 0.2922 & 0.0662 & 0.2171 & 0.3096 & 0.0924 \\ 
\rowcolor{palegray}
\cellcolor{white} & \textbf{IG$^2$} \cite{zhuo2024ig2}               & 0.1824 & 0.2676 & 0.0851 & 0.3047 & \underline{0.3584} & 0.0538 & 0.2569 & 0.3447 & 0.0878 \\ 
\cellcolor{white} & \textbf{SAMP} \cite{zhang2024path}               & 0.2880 & 0.3278 & 0.0398 & 0.2811 & 0.3118 & \underline{0.0307} & 0.3089 & 0.3751 & 0.0662 \\ 
\cmidrule(lr){2-11}
\rowcolor{lightgray}
    \multirow{-11}{*}{\rotatebox[origin=c]{90}{\textbf{ImageNet2012}}} 
\cellcolor{white} & \textbf{SIG (ours)}                            & \textbf{0.3422} & \textbf{0.3800} & \textbf{0.0378} & \textbf{0.3736} & \textbf{0.4107} & 0.0371 & \textbf{0.3784} & \textbf{0.4264} & \textbf{0.0480} \\ 
\bottomrule
\end{tabular}
\end{table*}

\Cref{fig:path_vgg} dissects the attribution process to reveal why SIG outperforms baseline methods in semantic localization. 
IG fails to suppress high-frequency noise; as shown in the middle block, noisy gradients are continuously accumulated throughout the entire integration path, resulting in a scattered attribution map. 
While BIG also progresses from blurred to sharp features, it tends to produce attributions dominated by boundary artifacts. The blurred path inherently emphasizes high-contrast edges, causing gradients to activate primarily along the object's silhouette. 
In contrast, SIG demonstrates a distinct coarse-to-fine aggregation mechanism. A key insight is that SIG implicitly filters out early-stage noise, as gradients remain suppressed until the coarse spectral components form a recognizable semantic structure. Consequently, attribution accumulates only after the classifier perceives the object's identity, ensuring the final attribution map captures the relevant features of the object while ignoring irrelevant background noise. 
Additional examples are provided in Appendix~\ref{app:path}.

\begin{figure}[ht]
    \centering
    \includegraphics[width=\linewidth]{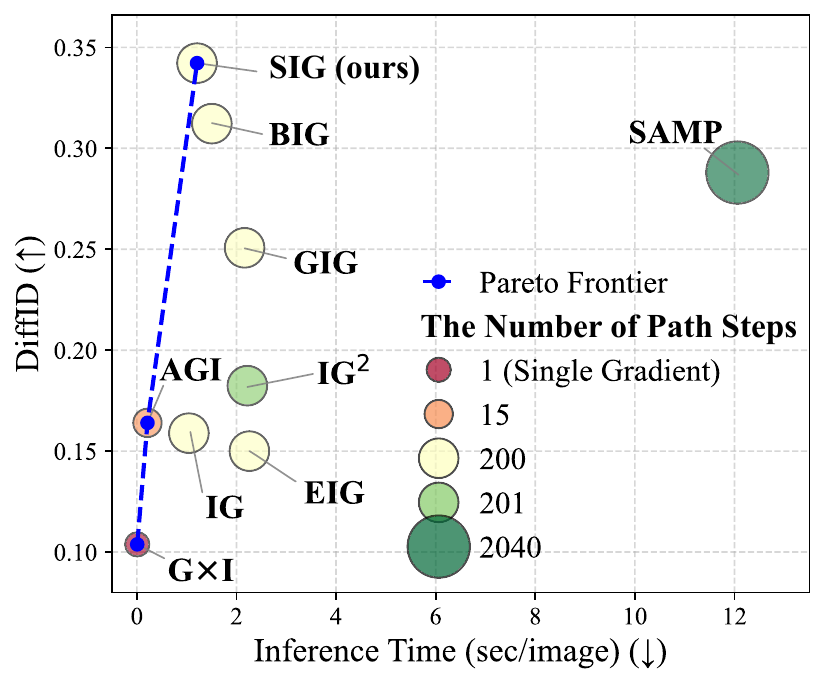}
    \caption{\textbf{Trade-off between explanation quality (DiffID) and inference time.} Results are evaluated on ImageNet2012 using ResNet18. SIG achieves the highest DiffID score while maintaining a Pareto-optimal position. 
    }
    \Description{
    A scatter plot comparing attribution methods based on explanation quality (DiffID, y-axis, higher is better) and efficiency (Inference Time, x-axis, lower is better).
    Circle sizes represent the number of path steps, ranging from 1 to 201.
    A blue dashed line indicates the Pareto Frontier, connecting GxI, AGI, and SIG. 
    SIG is positioned at the top-left, achieving the highest DiffID (0.34) with a low inference time (1.2 seconds), significantly outperforming more computationally expensive methods such as SAMP (12.0 seconds).}
    \label{fig:runtime}
\end{figure}

\subsection{Quantitative Evaluation}
\label{sec:quant}

\paragraph{\textbf{Faithfulness Metrics}}
\Cref{tab:quant} presents quantitative results across three datasets (Oxford-IIIT Pet, Oxford 102 Flower, and ImageNet2012) and three architectures (ResNet18, VGG16, and InceptionV1), evaluated on DiffID ($\uparrow$), Insertion ($\uparrow$), and Deletion ($\downarrow$). SIG consistently matches or outperforms existing methods across the majority of settings, particularly in DiffID and Insertion scores. On ImageNet, SIG achieves the best results in nearly all settings across three architectures and three metrics. Across the remaining datasets, SIG remains competitive with strong baselines such as BIG and SAMP, frequently ranking first or second. On Oxford 102 Flower, a fine-grained dataset where petal textures and small visual details are highly discriminative, blur-to-sharp paths such as BIG can be locally competitive on ResNet18 Deletion, yet SIG still attains the best DiffID on VGG16 and InceptionV1 and the best Deletion on InceptionV1, indicating that the coarse-to-fine path remains effective even when fine details dominate the decision. 
Overall, these results indicate that the coarse-to-fine integration path of SIG yields faithful attributions across diverse datasets and model architectures.

\paragraph{\textbf{Leakage-Aware Perturbation (ROAD)}}
Standard insertion / deletion protocols are known to be susceptible to information leakage through the shape of the perturbation mask~\citep{rong2022consistent}. To address this directly, we additionally evaluate ROAD~\citep{rong2022consistent} on ImageNet using a fixed imputer, which is explicitly designed to mitigate this leakage. SIG attains the best GAP score on all three backbones (ResNet18 0.2673, VGG16 0.3011, Inception 0.3104), outperforming the second-best method BIG on every classifier; the full table with MoRF/LeRF decompositions appears in Appendix~\ref{app:road}.

\paragraph{\textbf{Non-Perturbation Localization}}
As an orthogonal validation that does not rely on input perturbation, we evaluate spatial localization against ImageNet ground-truth bounding boxes and class-agnostic SAM~2 pseudo-masks~\citep{ravi2025sam}, and report Pointing Game~\citep{zhang2018top} accuracy. SIG attains the best mean BBox IoU on ResNet18 and InceptionV1 and the second-best on VGG16, the best Pointing Game accuracy on average (0.6694 vs.\ 0.6476 for BIG), and the best mean SAM~2 IoU (0.2315 vs.\ 0.2171 for BIG). Detailed numbers are reported in Appendix~\ref{app:localization}. Together with the perturbation-based and leakage-aware results, these localization measurements provide a triangulated empirical case for SIG's improved faithfulness.

\paragraph{\textbf{RemOve And Retrain (ROAR)}}
To further validate the robustness of our method, we conduct the ROAR~\citep{hooker2019benchmark} evaluation on the CIFAR-10 dataset. Unlike standard perturbation metrics, ROAR involves retraining the model on the modified dataset, thereby providing a more rigorous assessment of feature importance by mitigating the impact of distribution shifts. 
\Cref{fig:roar} illustrates the test accuracy as a function of the pixel removal ratio. We observe that SIG (cyan line) induces the sharpest degradation in model accuracy compared to all other baselines, including strong competitors like BIG and SAMP. 
For instance, at a 50\% removal ratio, the accuracy of the model trained on SIG-processed data drops to approximately 30\%, significantly lower than other methods. 
Consequently, SIG achieves the lowest area under the curve (AUC), demonstrating that it most accurately identifies the discriminative features essential for the model's decision-making process.

\subsection{Runtime and Efficiency Analysis}
\label{sec:runtime}

We evaluate the trade-off between explanation quality and computational cost to demonstrate the practical feasibility of our method. 
To ensure precise benchmarking, all inference times were measured on NVIDIA B200 GPUs using a batch size of 1. 
As shown in \Cref{fig:runtime}, SIG achieves the highest DiffID score, significantly outperforming baseline methods while maintaining a Pareto-optimal position. 
Notably, our method introduces only a marginal computational overhead compared to Standard IG. 
Specifically, the average inference time for SIG is $1.2085 \pm 0.0943$ seconds per image, which corresponds to an approximate $16\%$ increase over IG ($1.0450 \pm 0.0853$ seconds). 
This minimal increase is consistent with the theoretical complexity of 
$O(D^{1.5} + M \cdot C_f)$
, where the additional term $D^{1.5}$ from the spectral path reconstruction is negligible compared to the cost of model backpropagation ($C_f$), as detailed in Appendix~\ref{app:runtime}. 
These results confirm that SIG effectively incorporates spectral components to boost explanation quality without incurring prohibitive computational costs.

\section{Related Work}
\label{sec:related_work}

\subsection{Attribution Methods}

Feature attribution methods explain neural network predictions by assigning importance scores to input features. These methods fall into two broad categories based on their access requirements. Gradient-based methods leverage internal model information through backpropagation. The saliency method \citep{simonyan2014saliency} computes the gradient of the output with respect to input pixels, while Grad$\times$Input \citep{shrikumar2016not} multiplies gradients with input values for better alignment. Guided Backpropagation \citep{springenberg2014striving} filters negative gradients during backward passes, and Layerwise Relevance Propagation \citep{bach2015pixel} redistributes relevance scores through network layers using propagation rules. SmoothGrad \citep{smilkov2017smoothgrad} reduces noise by averaging gradients over multiple noise-perturbed inputs. Perturbation-based methods, in contrast, require only input-output access. LIME \citep{ribeiro2016why} fits local surrogate models on masked inputs, while RISE \citep{petsiuk2018rise} aggregates predictions from randomly masked inputs to estimate importance. Sobol attribution \citep{fel2021look} employs variance-based sensitivity analysis for feature importance estimation.

\subsection{Path-based Attribution Methods}

Integrated Gradients (IG) \citep{sundararajan2017axiomatic} computes attributions by accumulating gradients along a path connecting a baseline to the input, satisfying axiomatic properties including completeness and sensitivity. However, the choice of integration path significantly affects attribution quality, motivating substantial research into alternative path constructions. Guided Integrated Gradients (GIG) \citep{kapishnikov2021guided} greedily selects features with small gradient magnitudes at each step, avoiding high-curvature regions that contribute noisy gradients. Blur Integrated Gradients (BIG) \citep{xu2020attribution} defines paths through progressive Gaussian deblurring, adhering to scale-space axioms. Iterative Gradient path Integrated Gradient (IG$^2$) \citep{zhuo2024ig2} optimizes paths to follow steeper gradients, avoiding saturation regions. Adversarial Gradient Integration (AGI) \citep{pan2021explaining} constructs paths through adversarial examples using loss minimization. Salient Manipulation Path (SAMP) \citep{zhang2024path} formulates path selection as an optimization problem under a concentration principle that allocates attributions to the most salient features. Stick-breaking Path Integration (SPI) \citep{jeon2023beyond} samples stochastic paths from a stick-breaking process and aggregates attributions across multiple paths to reduce variance. Denoising Diffusion Path (DDPath) \citep{lei2024denoising} leverages diffusion models to construct paths aligned with the data distribution, ensuring intermediate samples remain realistic. Manifold-based approaches including Enhanced Integrated Gradients (EIG) \citep{jha2020enhanced} and Manifold Integrated Gradients (MIG) \citep{zaher2024manifold} constrain paths to the latent space of generative models, ensuring intermediate points remain on the data distribution. Manifold-Aligned Guided Integrated Gradients (MA-GIG) \citep{kim2026manifold} further combines GIG's adaptive feature selection with manifold-aware path construction to jointly avoid high-curvature regions and stay close to the data manifold. In contrast, our method differs from these approaches by leveraging singular value decomposition to construct paths that prioritize structurally important components, requiring neither generative models nor iterative optimization.

\section{Conclusion}

We proposed Spectral Integrated Gradients (SIG), a path-based attribution method that leverages singular value decomposition to construct integration paths respecting the spectral hierarchy of visual information. By activating dominant structural components before fine-grained details, SIG follows a principled coarse-to-fine progression that suppresses the accumulation of erratic gradients during early integration steps. A continuous relaxation with an overlap parameter enables smooth transitions between spectral components and recovers standard IG as a special case. Extensive experiments across multiple image classification benchmarks and architectures demonstrated that SIG consistently produces cleaner attribution maps and achieves improved quantitative performance on perturbation-based (DiffID, ROAR), leakage-aware (ROAD), and non-perturbation localization (BBox IoU, Pointing Game, SAM~2 IoU) evaluations compared to existing path-based methods, while introducing only marginal computational overhead beyond standard Integrated Gradients.

\paragraph{\textbf{Limitations.}}
SIG inherits the standard IG assumption of a fixed baseline (typically a zero image), which may limit attribution quality for inputs where the baseline choice is ambiguous. The rank-constrained optimization criterion in Eq.~\eqref{eq:optimization} justifies the path construction objective but does not by itself imply causal or semantic faithfulness of the resulting attributions; faithfulness is supported empirically through the triangulated evaluations above, not by a formal guarantee. The correspondence between singular value magnitude and semantic importance is also empirical rather than universal: it holds well for natural images, where SVD's variance ordering aligns with spatial structure, but a pilot study on text classification (Appendix~\ref{app:text_domain}) shows that SIG approaches IG only when the overlap parameter is large, indicating that the coarse-to-fine inductive bias is most beneficial for image inputs. Extending SIG to adaptive baselines, joint multi-channel decompositions, or alternative spectral families that better capture domain-specific structure represents a promising direction for future work. Because SIG modifies only the integration path, the same construction can in principle be combined with other Aumann--Shapley-style attribution rules beyond IG.

\begin{acks}
This work was supported in part by the Institute for Information \& Communications Technology Planning \& Evaluation (IITP) grants funded by the Korean government (MSIT): RS-2019-II190075 (Artificial Intelligence Graduate School Program, KAIST), RS-2022-II220984 (Development of Artificial Intelligence Technology for Personalized Plug-and-Play Explanation and Verification of Explanation), and RS-2024-00457882 (AI Research Hub Project). This work was also supported by the InnoCORE program of MSIT (N10250156), the AI Computing Infrastructure Enhancement (GPU Rental Support) User Support Program funded by MSIT (RQT-25-120227).
\end{acks}



\bibliographystyle{ACM-Reference-Format}
\bibliography{main}

\appendix
\crefalias{section}{appendix}

\section*{Appendix Contents}
\begingroup
\setlength{\parindent}{0pt}
\newcommand{\tocsec}[3]{\noindent\hyperref[#3]{\makebox[1.5em][l]{\textbf{#1}}\textbf{#2}}\dotfill\hyperref[#3]{\pageref{#3}}\par}
\newcommand{\tocsub}[3]{\noindent\hspace{1.5em}\hyperref[#3]{\makebox[2.0em][l]{#1}#2}\dotfill\hyperref[#3]{\pageref{#3}}\par}
\tocsec{A}{Proofs}{appendix:proofs}
\tocsec{B}{Experimental Setup}{app:setup}
\tocsec{C}{Computational Cost \& Runtime Analysis}{app:runtime}
\tocsub{C.1}{Computational Cost \& Resources}{app:runtime}
\tocsub{C.2}{Efficient Runtime in Faithfulness Metrics}{app:runtime_eff}
\tocsec{D}{Additional Experimental Results}{app:add}
\tocsub{D.1}{Extended Frequency-Domain Analysis}{app:freq}
\tocsub{D.2}{Extended Path Analysis}{app:path}
\tocsub{D.3}{Additional Qualitative Results}{app:qual}
\tocsec{E}{Hyperparameter Sensitivity Analysis}{app:quant}
\tocsec{F}{Baseline Sensitivity}{app:baseline_sensitivity}
\tocsec{G}{Joint vs.\ Per-Channel SVD}{app:joint_svd}
\tocsec{H}{Logits vs.\ Softmax Probabilities}{app:logits}
\tocsec{I}{Alternative Spectral Decompositions}{app:decomposition}
\tocsec{J}{Gating Schedule Ablation}{app:gating}
\tocsec{K}{High-Resolution Scalability}{app:scalability}
\tocsec{L}{Leakage-Aware Perturbation: ROAD}{app:road}
\tocsec{M}{Non-Perturbation Localization}{app:localization}
\tocsec{N}{Generalization Beyond Images}{app:text_domain}
\tocsec{O}{Implementation Code}{app:code}
\endgroup
\vspace{0.1cm}

\section{Proofs}\label{appendix:proofs}

\begin{proposition}[Geometric Symmetry Preservation]
Let $T$ be a permutation matrix representing a discrete spatial operation (e.g., $90^\circ$-multiple rotation or reflection).
Let $f:\mathbb{R}^n\to\mathbb{R}$ satisfy $f(Tx)=f(x)$ for all $x$.
Then SIG is equivariant under $T$:
\[
\mathcal{A}_{SIG}(Tx',Tx)=T\,\mathcal{A}_{SIG}(x',x).
\]
\end{proposition}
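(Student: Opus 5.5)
The plan is to show that the whole SIG path transforms covariantly under $T$, i.e.\ $\gamma^{T}(\alpha)=T\gamma(\alpha)$ for every $\alpha$, where $\gamma^{T}$ is the SIG path built from the pair $(Tx',Tx)$. Equivariance of the attribution then follows by pushing $T$ through the gradient and through the Hadamard product inside the path integral~\eqref{eq:path_integral}, or equally through the Riemann sum in Algorithm~\ref{alg:sig}. Since $T$ is linear, the transformed difference is $\Delta_T = Tx-Tx' = T\Delta$.

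\textbf{Step 1: the path is a spectral matrix function.} This is the step I expect to be the main obstacle, because it has two subtleties. The first is that $T$ is given as a permutation of the flattened vector in $\mathbb{R}^n$, while the SVD is taken of the $H\times W$ matrix. So I first restrict to the operations named in the statement and write them in matrix form. Reflections act as $\Delta\mapsto P\Delta Q^\top$, where $P,Q$ are permutation (hence orthogonal) matrices, one of which may be the identity. A $90^\circ$ rotation acts as $\Delta\mapsto J\Delta^\top$, with $J$ the row-reversal permutation; this also covers $H\neq W$. The second subtlety is that the SVD is not unique: there are sign flips and rotations within repeated singular values. I handle this by observing that $\phi_i(\alpha;\omega)$ depends on component $i$ only through $\sigma_i/\sigma_1$. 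Hence there is a scalar function $g_\alpha$, depending only on $\sigma_1$, with $g_\alpha(0)=0$, such that
\[
\gamma(\alpha)-x' = \textstyle\sum_i g_\alpha(\sigma_i)\,u_i v_i^\top =: G_\alpha(\Delta),
\]
where $g_\alpha(\sigma)=\sigma\,\phi(\alpha;\sigma/\sigma_1)$. Equal singular values receive equal scaling, and zero singular values contribute nothing. By the standard argument for singular-value functions, $G_\alpha(\Delta)$ is therefore independent of which SVD is chosen. I will make this precise by writing $G_\alpha(\Delta)=\sum_{\sigma} \tfrac{g_\alpha(\sigma)}{\sigma}\,\Pi^{L}_\sigma\Delta$ in terms of spectral projectors $\Pi^{L}_\sigma$ of $\Delta\Delta^\top$ on each distinct nonzero singular value $\sigma$, which are canonical.

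\textbf{Step 2: equivariance of $G_\alpha$.} If $\Delta=U\Sigma V^\top$, then $P\Delta Q^\top=(PU)\Sigma(QV)^\top$ is again an SVD, and $\Delta^\top = V\Sigma U^\top$. Both operations leave the singular values, and in particular $\sigma_1$, unchanged. The scaling schedule $\{s_i,e_i\}$ is therefore identical, and we get $G_\alpha(P\Delta Q^\top)=PG_\alpha(\Delta)Q^\top$ and $G_\alpha(\Delta^\top)=G_\alpha(\Delta)^\top$. Composing these gives $G_\alpha(T\Delta)=T\,G_\alpha(\Delta)$ for each admissible $T$, so $\gamma^T(\alpha)=Tx'+TG_\alpha(\Delta)=T\gamma(\alpha)$, and also $\dot\gamma^T=T\dot\gamma$.

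\textbf{Step 3: gradient equivariance and assembly.} Differentiating $f(Ty)=f(y)$ gives $T^\top\nabla f(Ty)=\nabla f(y)$. Since $T$ is orthogonal, this means $\nabla f(Ty)=T\nabla f(y)$. For a permutation $T$ the Hadamard product also commutes with $T$: $(Ta)\odot(Tb)=T(a\odot b)$. The attribution then satisfies
\[
\mathcal{A}_{SIG}(Tx',Tx)=\int_0^1 \nabla f(T\gamma(\alpha))\odot T\dot\gamma(\alpha)\,\mathrm d\alpha = T\int_0^1 \nabla f(\gamma(\alpha))\odot\dot\gamma(\alpha)\,\mathrm d\alpha = T\,\mathcal{A}_{SIG}(x',x).
\]
The identical computation applies term by term to the discrete sum in Algorithm~\ref{alg:sig}, since $x_T^{(m)}=Tx^{(m)}$ for every $m$. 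For multi-channel inputs, the same spatial $T$ acts on each channel and the per-channel SVDs are handled independently, so the argument applies channelwise.
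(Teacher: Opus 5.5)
Your proof is correct and follows the same three-step structure as the paper's proof: establish $\gamma_T(\alpha)=T\gamma(\alpha)$ from an SVD of $T\Delta$ with unchanged singular values, derive $\nabla f(Ty)=T\nabla f(y)$ from the chain rule and the orthogonality of $T$, and pull $T$ through the Hadamard product inside the path integral. Your Steps~1--2 are more careful than the paper's argument, which treats $T$ as left multiplication $T\Delta$ on the $H\times W$ matrix (so it literally covers only row permutations such as vertical flips) and tacitly assumes the path does not depend on which SVD is chosen; your spectral-function form $G_\alpha$ and the explicit actions $\Delta\mapsto P\Delta Q^\top$ and $\Delta\mapsto J\Delta^\top$ close both of these gaps.
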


\begin{proof}
Let $\Delta = x-x'$ and write its SVD as $\Delta = U\Sigma V^\top$.
For transformed inputs, $\Delta_T = Tx-Tx' = T\Delta$.
Since $T$ is orthogonal, $TU$ is also orthogonal, so $(TU)\Sigma V^\top$ is a valid SVD of $\Delta_T$ with the same singular values $\sigma_i$.

Define the SIG path
\[
\gamma(\alpha)=x' + \sum_i \phi_i(\alpha; \omega)\sigma_i u_i v_i^\top,
\quad
\gamma_T(\alpha)=Tx' + \sum_i \phi_i(\alpha; \omega)\sigma_i (Tu_i) v_i^\top.
\]
Then $\gamma_T(\alpha)=T\gamma(\alpha)$ for all $\alpha$, and differentiating gives $\gamma_T'(\alpha)=T\gamma'(\alpha)$.

Next, from $f(Tx)=f(x)$ and the chain rule,
\[
\nabla f(Tx) = T\nabla f(x) \qquad (\text{since } T^{-T}=T \text{ for orthogonal }T).
\]
Thus $\nabla f(\gamma_T(\alpha)) = \nabla f(T\gamma(\alpha)) = T\nabla f(\gamma(\alpha))$.

Finally, SIG attribution is the path integral
\[
\mathcal{A}_{SIG}(x',x)=\int_0^1 \nabla f(\gamma(\alpha))\odot \gamma'(\alpha)\,d\alpha.
\]
Therefore,
\begin{align*}
\mathcal{A}_{SIG}(Tx',Tx)
&=\int_0^1 \nabla f(\gamma_T(\alpha))\odot \gamma_T'(\alpha)\,d\alpha\\
&=\int_0^1 \bigl(T\nabla f(\gamma(\alpha))\bigr)\odot \bigl(T\gamma'(\alpha)\bigr)\,d\alpha.
\end{align*}
Since $T$ is a permutation matrix, it commutes with element-wise products:
$(Ta)\odot(Tb)=T(a\odot b)$.
Hence
\[
\mathcal{A}_{SIG}(Tx',Tx)
= T\int_0^1 \nabla f(\gamma(\alpha))\odot \gamma'(\alpha)\,d\alpha
= T\,\mathcal{A}_{SIG}(x',x).
\]
\vspace{-0.2cm}
\end{proof}

\section{Experimental Setup}
\label{app:setup}


We describe the implementations and hyperparameters used for reproducibility.  Hyperparameters follow official implementations or recommendations from the respective papers and are applied consistently across all datasets and classifiers.

\begin{itemize}[leftmargin=*]
    \item \textbf{G$\times$I (Gradient $\times$ Input)}~\citep{shrikumar2016not}: Computes attribution by element-wise multiplication of the input gradient with the input values.
    
    \item \textbf{Integrated Gradients (IG)}~\citep{sundararajan2017axiomatic}: We use 200 integration steps along a straight-line path with a zero baseline.
    
    \item \textbf{Guided IG (GIG)}~\citep{kapishnikov2021guided}: We use 200 integration steps with a feature selection fraction of 0.1 (10\% of features updated per step) and a zero baseline.
    
    \item \textbf{IG$^2$}~\citep{zhuo2024ig2}: We use 201 steps with a step size of 0.02 and random reference mode.
    
    \item \textbf{Adversarial Gradient Integration (AGI)}~\citep{pan2021explaining}: We use 15 maximum iterations, 10 negative classes, and a step size of 0.05.
    
    \item \textbf{Blur IG (BIG)}~\citep{xu2020attribution}: Integrates gradients along a Gaussian blur path in scale-space. We use 200 integration steps with maximum blur standard deviation $\sigma_{\max} = 35$.
    
    \item \textbf{SAMP}~\citep{zhang2024path}: Searches for paths that satisfy the concentration principle. We use step size 5, fragment count $n_{\text{frag}} = 5$, blur kernel length $k_{\text{len}} = 11$, kernel sigma $k_{\sigma} = 5$, bidirectional path traversal, and blur as the insertion baseline.
    
    \item \textbf{Enhanced IG (EIG)}~\citep{jha2020enhanced}: Performs linear interpolation in the VAE latent space with 200 integration steps and a zero baseline.
    
    \item \textbf{Manifold IG (MIG)}~\citep{zaher2024manifold}: Computes geodesic paths in the VAE latent space with 200 integration steps and a zero baseline. We use a learning rate of $5 \times 10^{-5}$ for geodesic optimization with 2 iterations.
    
    \item \textbf{Spectral IG (Ours)}: We use 200 integration steps with an overlap parameter $\omega = 0.4$ and a zero baseline.
\end{itemize}

\section{Computational Cost \& Runtime Analysis}
\label{app:runtime}

\subsection{Computational Cost \& Resources}
All experiments were conducted on a system equipped with four NVIDIA B200 GPUs. The computational complexity of our proposed module is $O(D^{1.5} + M \cdot C_f)$, where $M$ denotes the number of integration steps, $D$ represents the input dimension (e.g., total number of pixels), and $C_{f}$ accounts for the computational cost of a single backward pass of a classifier. To ensure reproducibility, we measured the inference time, which consumed approximately 0.17 GPU hours for computing attributions of 500 samples. We utilized a batch size of 1 to compute gradients without precision loss.

\subsection{Efficient Runtime in Faithfulness Metrics} 
\label{app:runtime_eff}

\begin{figure}[ht]
    \centering
    \begin{subfigure}{\linewidth}
        \centering
        \includegraphics[width=0.97\linewidth]{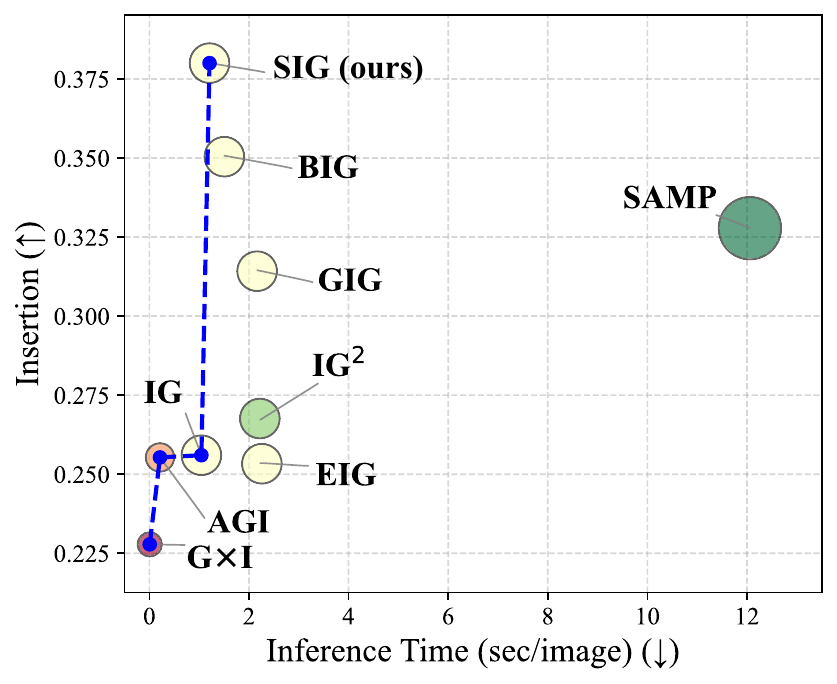}
        \caption{Performance on Insertion Game}
    \end{subfigure}
    \vspace{0.2cm} 
    \begin{subfigure}{\linewidth}
        \centering
        \includegraphics[width=0.97\linewidth]{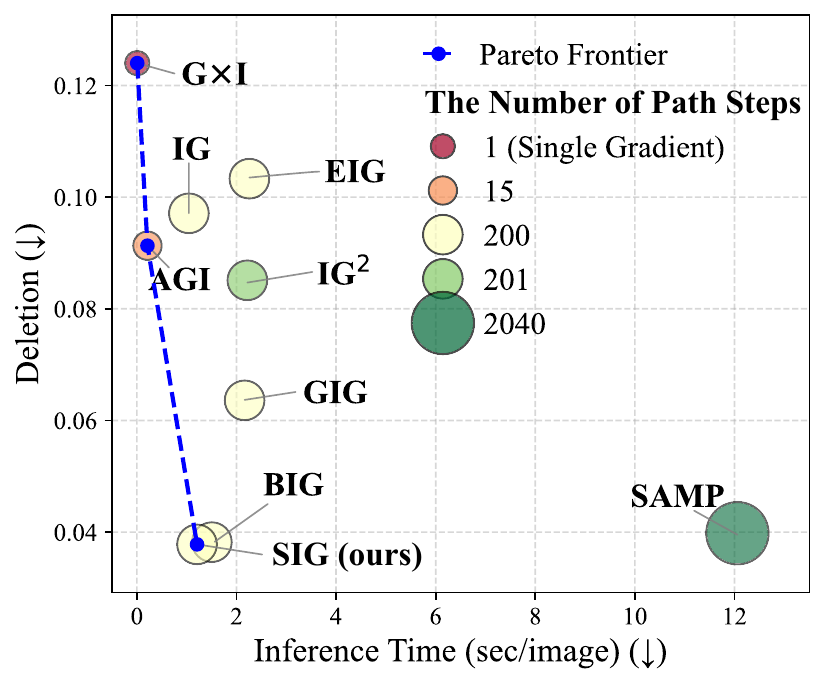}
        \caption{Performance on Deletion Game}
    \end{subfigure}
    \caption{\textbf{Extended runtime analysis on faithfulness metrics.} We report the trade-off curves for (a) Insertion and (b) Deletion games. SIG consistently occupies the optimal region (top-left for Insertion, bottom-left for Deletion) across both benchmarks, demonstrating robust fidelity with minimal latency increase.}
    \label{fig:runtime_full} 
    \vspace{-0.2cm}
    \Description{
    The figure consists of two scatter plot charts comparing XAI methods across two benchmarks: (a) Insertion Game (higher is better) and (b) Deletion Game (lower is better), both plotted against Inference Time (x-axis, lower is better). 
    1. Axes and Legend: The x-axis represents Inference Time in seconds per image. The y-axis represents the game score. Circle sizes indicate the 'Number of Path Steps', with SIG using 200 steps. A blue dashed line marks the Pareto Frontier.
    2. Performance in Insertion Game (a): SIG reaches the highest Insertion score (0.38) at 1.2 seconds, outperforming BIG (0.35, 1.51 seconds), GIG (0.31, 2.16 seconds), and IG (0.26, 1.05 seconds) while remaining significantly faster than SAMP (0.33, 12 seconds).
    3. Performance in Deletion Game (b): SIG achieves the best (lowest) Deletion score (0.038) at the same inference time, forming the optimal point on the Pareto Frontier along with GxI (0.12, 0.008 seconds) and AGI (0.09, 0.21 seconds).
    4. Summary: Across both games, SIG consistently occupies the top-left (for Insertion) or bottom-left (for Deletion) optimal regions, demonstrating superior explanation quality with minimal computational overhead compared to the standard IG.
    }
\end{figure}

We present an extended runtime analysis of baseline methods in conjunction with faithfulness metrics. Inference times were averaged across ten samples. A key distinction in our experimental setup concerns the SAMP algorithm, where the number of steps is determined adaptively by concurrently traversing two paths: one for removing influential features and another for inserting them. Consequently, for this analysis, we defined the total path length for SAMP as the sum of the steps taken in both the deletion and insertion paths.

\Cref{fig:runtime_full} illustrates the trade-off between computational efficiency (inference time per image) and faithfulness performance (Insertion and Deletion scores), with the number of path steps provided as a reference (represented by circle size). A critical observation is the substantial computational overhead associated with SAMP. As shown in the plots, SAMP requires approximately 12 seconds per image-nearly six times slower than other path-based methods such as GIG or SIG. This high latency is directly attributed to its adaptive mechanism, where the total number of path steps accumulates to 2,040 on average, significantly exceeding the fixed budget (e.g., 200 steps) utilized by baseline methods.

In contrast, SIG demonstrates a superior Pareto-optimal position between fidelity and runtime. Despite requiring significantly lower inference time (approximately 1.2 seconds per image), SIG outperforms the computationally expensive SAMP in both benchmarks. Specifically, SIG consistently occupies the optimal regions: the top-left in the Insertion game (\Cref{fig:runtime_full}a) and the bottom-left in the Deletion game (\Cref{fig:runtime_full}b). This indicates that the massive step count and resulting latency of SAMP do not translate into performance gains over SIG. These results validate that SIG provides robust fidelity with minimal latency, making it a far more practical choice for large-scale attribution tasks.

\section{Additional Experimental Results}
\label{app:add}

In this section, we provide supplementary experimental results that further validate the effectiveness and robustness of SIG.

\subsection{Extended Frequency-Domain Analysis}
\label{app:freq}

We provide extended frequency-domain visualizations in \Cref{fig:frequency_appx} to further substantiate the spectral properties of the integration path. Consistent with the main text analysis, we compare the 2D Discrete Fourier Transform (DFT) magnitude spectrum of the input difference $\Delta(\alpha)$ across integration steps $\alpha \in \{0, 0.25, 0.5, 0.75, 1\}$.


\begin{figure*}[t]
    \centering
    \begin{subfigure}[b]{0.443\linewidth}
        \centering
        \includegraphics[width=\linewidth]{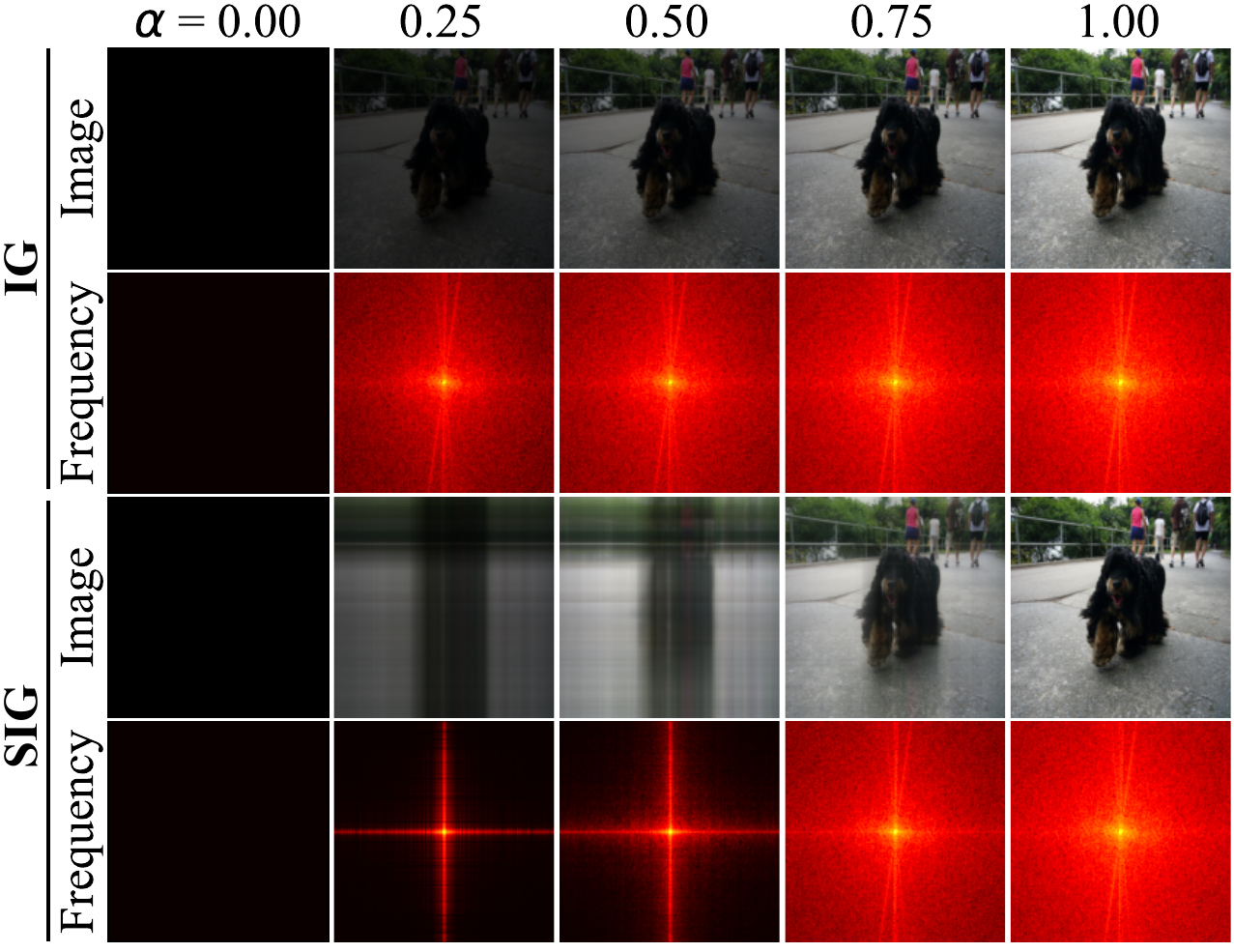}
    \end{subfigure}
    \hspace{1em}
    \begin{subfigure}[b]{0.443\linewidth}
        \centering
        \includegraphics[width=\linewidth]{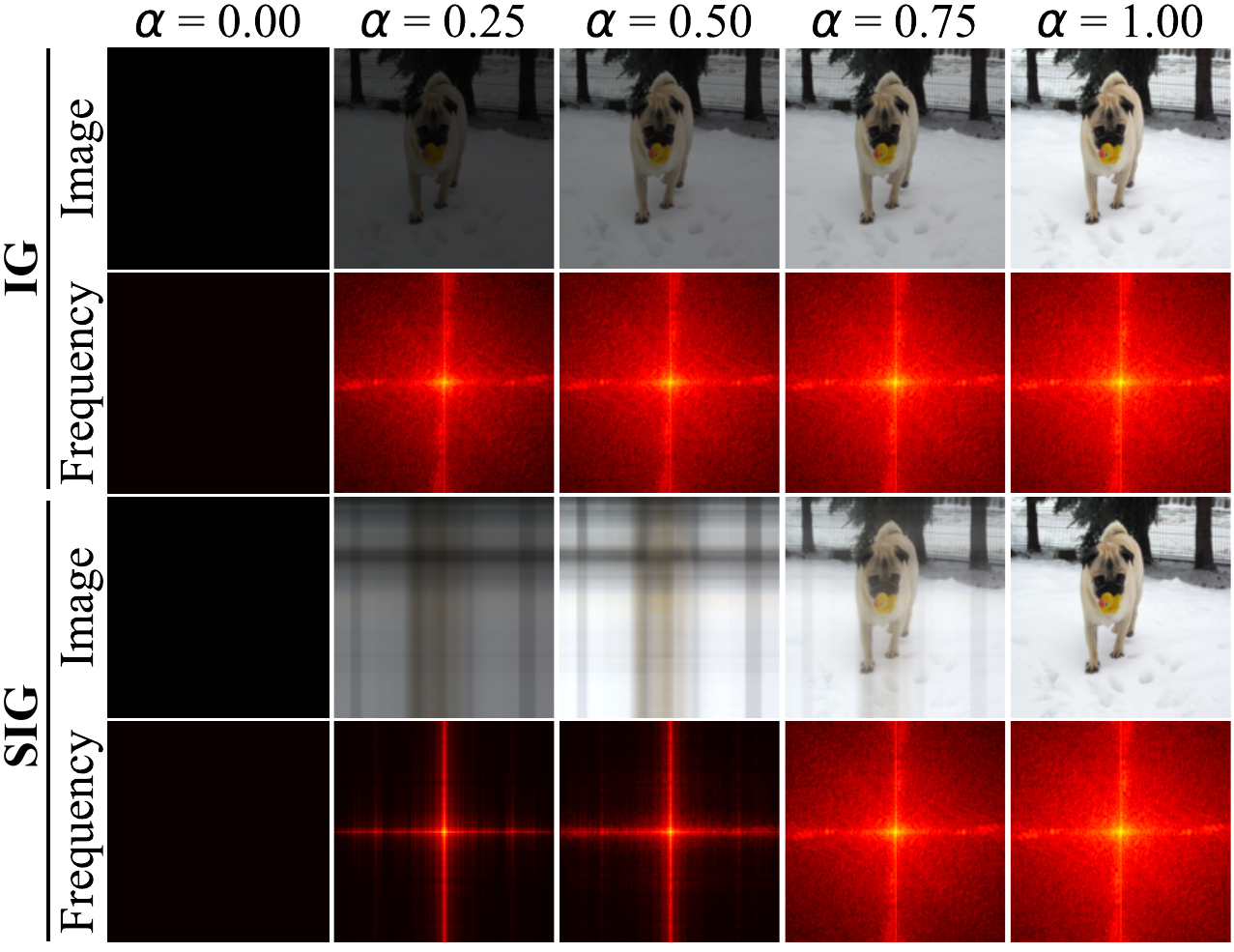}
    \end{subfigure}
    \vspace{-0.1cm}
    \caption{\textbf{Additional examples of frequency-domain analysis of integration paths.} Supplementary to \Cref{fig:frequency}, these examples further validate the distinct behavior of the integration paths. While IG (top) uniformly introduces all frequencies, SIG (bottom) consistently reconstructs the image from low-frequency structures to high-frequency details.}
    \label{fig:frequency_appx}
    \vspace{-0.2cm}
\end{figure*}

\begin{figure*}[ht]
    \centering
    \begin{subfigure}[b]{0.443\linewidth}
        \centering
        \includegraphics[width=\linewidth]{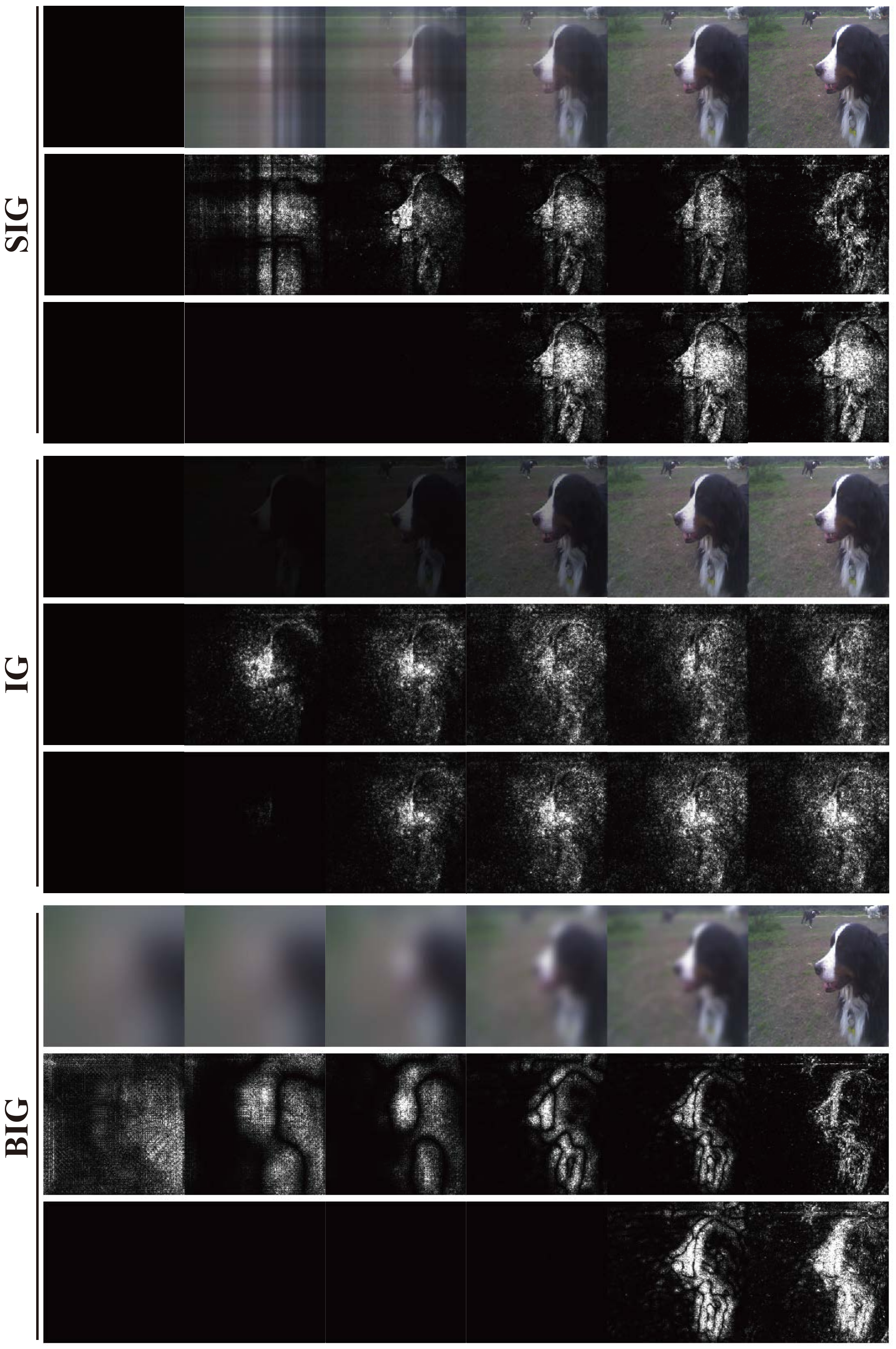}
        \caption{Predicted: `Bernese Mountain Dog' (0.907 confidence)}
        \label{fig:path_resnet_dog}
    \end{subfigure}
    \hspace{2em}
    \begin{subfigure}[b]{0.443\linewidth}
        \centering
        \includegraphics[width=\linewidth]{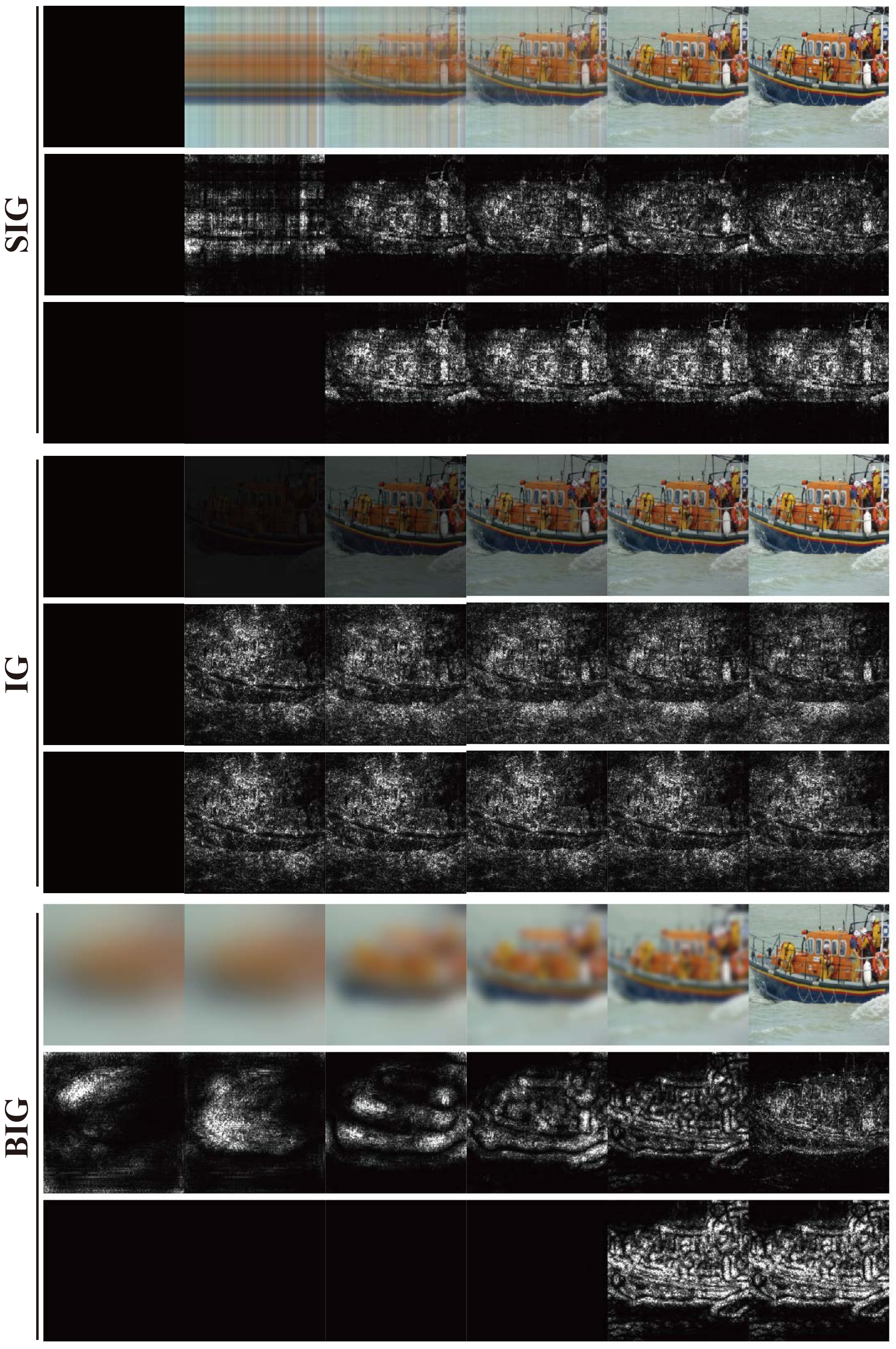}
        \caption{Predicted: `Lifeboat' (1.000 confidence)}
        \label{fig:path_resnet_scorpion}
    \end{subfigure}
    \caption{\textbf{Additional path analysis examples.} Supplementary to \Cref{fig:path_vgg}, these results on ResNet18 (left) and VGG16 (right) further validate our observations. The visualization setup follows the same convention as the main text. Consistent with previous results, SIG effectively suppresses early-stage noise and boundary artifacts compared to IG and BIG.}
    \label{fig:path_resnet_combined}
    \vspace{-0.2cm}
    \Description{Two side by side path analysis panels on ImageNet. The left panel uses ResNet18 on an image classified as Bernese Mountain Dog, and the right panel uses VGG16 on an image classified as Lifeboat. Each panel compares three methods (IG, BIG, and SIG) across three rows: interpolated path images, per step gradient contributions, and cumulative attribution maps at 10 evenly spaced integration steps. In both examples, IG produces noisy gradients throughout all steps, BIG concentrates gradients along edges, and SIG suppresses gradients during early steps and accumulates attribution only after coarse structure is established, yielding cleaner and more semantically focused attribution maps.}
\end{figure*}


\subsection{Extended Path Analysis}
\label{app:path}

In~\Cref{fig:path_resnet_combined}, we present additional path analysis results. These figures provide further comparisons of our SIG against the baseline path-based methods of IG and BIG on test samples from the ImageNet dataset using ResNet18 classifier.

\subsection{Additional Qualitative Results}
\label{app:qual}

We provide extended qualitative comparisons across all three datasets: ImageNet 2012 (\Cref{fig:qual_imagenet}), Oxford 102 Flower (\Cref{fig:qual_oxfordflower}), and Oxford-IIIT Pet (\Cref{fig:qual_oxfordpet}). Each figure presents attribution maps from ten methods evaluated on ResNet18, VGG16, and InceptionV1 classifiers. Across all settings, SIG consistently produces attribution maps that are more focused on semantically relevant regions while suppressing background clutter.

\section{Hyperparameter Sensitivity Analysis}
\label{app:quant}

\Cref{tab:quant_full} provides an extended quantitative evaluation of SIG across varying overlap parameters $\omega \in \{0.1, 0.2, \dots, 0.6\}$. While the main text reports results for $\omega=0.4$, this analysis demonstrates the method's sensitivity to the overlap parameter and confirms the robustness of the chosen configuration. 

The results indicate that performance is generally stable for $\omega \in [0.3, 0.5]$, with $\omega=0.4$ often yielding optimal or near-optimal scores across datasets and architectures. 

\paragraph{\textbf{Sensitivity Analysis.}} 
Lower overlap values $\omega=0.1, 0.2$ tend to produce slightly weaker results in DiffID and Insertion metrics compared to $\omega=0.4$, suggesting that a moderate degree of overlap is beneficial for smoothing the transition between spectral components. Conversely, increasing the overlap to $\omega=0.6$ leads to a noticeable performance drop (e.g., DiffID decreases from 0.5210 to 0.3631 on Oxford-IIIT Pet with ResNet18), likely because the path becomes too similar to the linear interpolation of standard IG, reintroducing noisy gradients.

\paragraph{\textbf{Robustness.}} 
Despite these variations, SIG with $\omega=0.4$ consistently outperforms or remains highly competitive against baseline methods like BIG and SAMP, validating it as a robust default setting for diverse image classification tasks. 
This ablation study confirms that the proposed spectral path construction is effective and that $\omega=0.4$ strikes a good balance between sequentially prioritizing structural information and maintaining path continuity.

\section{Baseline Sensitivity}
\label{app:baseline_sensitivity}

We evaluate the sensitivity of SIG and representative competitors to the choice of baseline image on ImageNet (ResNet18, 500 samples). We consider three commonly used baselines: a zero image (default), a per-channel mean image, and a Gaussian-blurred copy of the input. Table~\ref{tab:baseline_sensitivity} reports DiffID for each baseline. SIG attains the highest mean DiffID across the three baselines while exhibiting the smallest range, indicating that the spectral path is robust to the baseline choice. IG, in particular, is highly baseline-sensitive (range $0.1471$), which is consistent with its straight-line path placing all of the burden on the baseline-to-input difference.

\begin{table}[ht]
\centering
\caption{\textbf{Baseline sensitivity on ImageNet (ResNet18, DiffID $\uparrow$).} For each method, ``mean$_3$'' is the average DiffID across the three baselines (zero, mean, blur), and ``range'' is the max$-$min spread of DiffID over the same three baselines (smaller is more robust).}
\vspace{-0.1cm}
\label{tab:baseline_sensitivity}
\setlength{\tabcolsep}{6pt}
\begin{tabular}{lcccccc}
\toprule
\textbf{Method} & \textbf{zero} & \textbf{mean} & \textbf{blur} & \textbf{mean$_3$} & \textbf{range} \\
\midrule
IG  & 0.1564 & \underline{0.2720} & \underline{0.3036} & 0.2440 & 0.1471 \\
GIG & \underline{0.2540} & \textbf{0.3320} & 0.2884 & \underline{0.2915} & \underline{0.0780} \\
\rowcolor{lightgray}
SIG & \textbf{0.3453} & 0.3247 & \textbf{0.3138} & \textbf{0.3279} & \textbf{0.0316} \\
\bottomrule
\end{tabular}
\vspace{-0.2cm}
\end{table}

\section{Joint vs.\ Per-Channel SVD}
\label{app:joint_svd}

By default, SIG computes the SVD of the per-channel difference matrix $\Delta \in \mathbb{R}^{H\times W}$. An alternative is to reshape the multi-channel difference as a matrix $\Delta_{\text{joint}} \in \mathbb{R}^{C\cdot H \times W}$ and perform a single SVD over channels jointly, which can in principle exploit cross-channel correlations. Across the three datasets and three classifiers (9 settings), the joint variant yields a small mean DiffID improvement of $+0.0046$ on average, exceeding per-channel in $6/9$ settings. The improvement is marginal, and per-channel SVD is both simpler to implement and preserves the standard SIG formulation; we therefore retain per-channel SVD as the default.

\section{Logits vs.\ Softmax Probabilities}
\label{app:logits}

Following the original IG formulation~\citep{sundararajan2017axiomatic}, we compute attributions with respect to the post-softmax class probability $\sigma(f(x))_y$. As a sanity check, we re-run SIG using the raw class logit $f(x)_y$. The logit-based variant degrades DiffID in all $9$ settings (mean $\Delta\text{DiffID}=-0.1287$), without altering the rank ordering of methods. This is consistent with prior reports that probability-based scores yield more stable path integrals because saturated logits produce vanishingly small or unstable gradients.

\section{Alternative Spectral Decompositions}
\label{app:decomposition}

To isolate the contribution of the data-adaptive singular-component ordering, we replace the SVD path with three classical multi-scale alternatives that share a similar coarse-to-fine intuition: a discrete cosine transform (DCT) path, a wavelet-pyramid path, and a Laplacian-pyramid path. All variants use the same continuous overlap gating and identical hyperparameters. Table~\ref{tab:alt_paths} reports the average DiffID across the three datasets and three classifiers (9 settings each). SVD outperforms all three fixed-basis alternatives, suggesting that the data-adaptive nature of singular components, rather than the coarse-to-fine progression alone, is what drives the gain.

\begin{table}[ht]
\centering
\vspace{-0.2cm}
\caption{\textbf{Alternative spectral paths.} Mean DiffID is averaged over three datasets and three classifiers.}
\label{tab:alt_paths}
\vspace{-0.1cm}
\setlength{\tabcolsep}{8pt}
\begin{tabular}{lc}
\toprule
\textbf{Decomposition} & \textbf{Mean DiffID} ($\uparrow$) \\
\midrule
DCT & 0.2726 \\
Wavelet & 0.3346 \\
Laplacian & \underline{0.3507} \\
\rowcolor{lightgray}
SVD (SIG) & \textbf{0.4046} \\
\bottomrule
\end{tabular}
\vspace{-0.2cm}
\end{table}

\section{Gating Schedule Ablation}
\label{app:gating}

The continuous relaxation in Eq.~\eqref{eq:scaling_fn} uses a piecewise-linear ramp to gate each singular component. We compare against cosine, sigmoid, and step (hard-threshold) gating schedules with the same activation windows. As shown in Table~\ref{tab:gating}, linear and cosine perform nearly identically and sigmoid is marginally lower, while the hard step schedule incurs a noticeable drop, confirming that smooth gating is important for stable numerical integration.

\begin{table}[ht]
\centering
\vspace{-0.2cm}
\caption{\textbf{Gating schedule ablation.} Mean DiffID is averaged over three datasets and three classifiers.}
\label{tab:gating}
\vspace{-0.1cm}
\setlength{\tabcolsep}{8pt}
\begin{tabular}{lc}
\toprule
\textbf{Schedule} & \textbf{Mean DiffID} ($\uparrow$) \\
\midrule
\rowcolor{lightgray}
Linear (default) & \underline{0.4046} \\
Cosine & \textbf{0.4060} \\
Sigmoid & 0.3960 \\
Step & 0.3116 \\
\bottomrule
\end{tabular}
\vspace{-0.2cm}
\end{table}

\section{High-Resolution Scalability}
\label{app:scalability}

We measure the wall-clock cost of the SVD-based path construction at $256\times256$, $512\times512$, and $1024\times1024$ resolutions on a single NVIDIA B200 GPU (per-channel SVD, 200 integration steps, 5 samples each). Table~\ref{tab:scalability} reports the mean time for the SVD decomposition and for the full path reconstruction. Even at $1024\times1024$, the SVD itself takes less than one second per image, and the cost is dominated by classifier backward passes rather than by the spectral decomposition.

\begin{table}[ht]
\centering
\vspace{-0.2cm}
\caption{\textbf{SVD scalability (seconds, mean over 5 samples).}}
\label{tab:scalability}
\vspace{-0.1cm}
\setlength{\tabcolsep}{8pt}
\begin{tabular}{lcc}
\toprule
\textbf{Resolution} & \textbf{SVD decomp.} & \textbf{Path recon.} \\
\midrule
\rowcolor{lightgray}
$256\times256$ & \textbf{0.1192} & \textbf{0.4114} \\
$512\times512$ & \underline{0.2072} & \underline{0.4154} \\
$1024\times1024$ & 0.5485 & 0.4218 \\
\bottomrule
\end{tabular}
\vspace{-0.2cm}
\end{table}

\section{Leakage-Aware Perturbation: ROAD}
\label{app:road}
Standard insertion/deletion protocols can leak information through the shape and pattern of the removed pixels~\citep{rong2022consistent}. To directly mitigate this leakage, we evaluate ROAD~\citep{rong2022consistent} with a fixed imputer on ImageNet for all three classifiers. We report the GAP score together with its MoRF and LeRF components in Table~\ref{tab:road}. SIG attains the best GAP across all three classifiers and the best LeRF across all three. On MoRF, SIG is best on InceptionV1 and within $\sim0.001$--$0.003$ of BIG on the other two, which is expected since MoRF emphasises early removal of the most influential pixels where BIG's edge-heavy attributions are competitive. Each entry in Table~\ref{tab:road} is the mean over three independent random seeds controlling the ROAD imputer's noise and pixel-removal order; the corresponding standard deviations are uniformly $\le 0.003$ and hence omitted from the table.

\begin{table}[ht]
\centering
\vspace{-0.2cm}
\caption{\textbf{ROAD evaluation on ImageNet (fixed imputer).} GAP and LeRF higher is better; MoRF lower is better. Each entry is the mean over three random seeds for the ROAD imputer (standard deviations $\le 0.003$ across all entries and omitted for readability).}
\label{tab:road}
\vspace{-0.1cm}
\small
\setlength{\tabcolsep}{0.5pt}
\begin{tabular}{l ccc ccc ccc}
\toprule
& \multicolumn{3}{c}{\textbf{GAP} ($\uparrow$)} & \multicolumn{3}{c}{\textbf{MoRF} ($\downarrow$)} & \multicolumn{3}{c}{\textbf{LeRF} ($\uparrow$)} \\
\cmidrule(lr){2-4} \cmidrule(lr){5-7} \cmidrule(lr){8-10}
\textbf{Method} & R18 & VGG & Inc. & R18 & VGG & Inc. & R18 & VGG & Inc. \\
\midrule
G$\times$I & 0.1084 & 0.1591 & 0.1527 & 0.0875 & 0.0648 & 0.1033 & 0.1960 & 0.2239 & 0.2560 \\
IG & 0.1511 & 0.2068 & 0.2132 & 0.0685 & 0.0494 & 0.0837 & 0.2197 & 0.2561 & 0.2969 \\
BIG & \underline{0.2523} & 0.2771 & \underline{0.3027} & \textbf{0.0265} & \textbf{0.0209} & \underline{0.0410} & \underline{0.2788} & \underline{0.2980} & \underline{0.3436} \\
EIG & 0.1436 & 0.1883 & 0.1984 & 0.0744 & 0.0540 & 0.0912 & 0.2181 & 0.2423 & 0.2896 \\
MIG & 0.1501 & 0.1944 & 0.2121 & 0.0716 & 0.0499 & 0.0831 & 0.2217 & 0.2443 & 0.2952 \\
GIG & 0.1757 & 0.1959 & 0.2557 & 0.0364 & 0.0245 & 0.0466 & 0.2121 & 0.2204 & 0.3023 \\
AGI & 0.1126 & 0.1710 & 0.1894 & 0.0632 & 0.0504 & 0.0633 & 0.1758 & 0.2214 & 0.2527 \\
IG$^2$ & 0.1420 & \underline{0.2443} & 0.2181 & 0.0592 & 0.0335 & 0.0621 & 0.2012 & 0.2778 & 0.2802 \\
SAMP & 0.1199 & 0.1052 & 0.1873 & 0.0500 & 0.0368 & 0.0664 & 0.1699 & 0.1420 & 0.2537 \\
\rowcolor{lightgray}
SIG & \textbf{0.2673} & \textbf{0.3011} & \textbf{0.3104} & \underline{0.0273} & \underline{0.0234} & \textbf{0.0409} & \textbf{0.2946} & \textbf{0.3245} & \textbf{0.3513} \\
\bottomrule
\end{tabular}
\vspace{-0.2cm}
\end{table}

\section{Non-Perturbation Localization}
\label{app:localization}

Beyond perturbation-based metrics, we measure how well the attribution maps spatially localize the predicted object. We report three complementary measures on ImageNet: (i) mean intersection-over-union (IoU) of the top $15\%$ attribution mass against SAM~2~\citep{ravi2025sam} class-agnostic pseudo-masks, (ii) Pointing Game~\citep{zhang2018top} accuracy, and (iii) mean IoU of the attribution support against ImageNet ground-truth bounding boxes. Table~\ref{tab:localization} summarises the three metrics jointly. SIG attains the best mean SAM~2 IoU and the best Pointing Game accuracy on all three classifiers, and is best on two of three classifiers for BBox IoU (second-best on VGG16). These non-perturbation results are consistent with the perturbation-based and leakage-aware findings, providing a triangulated empirical case for SIG's improved faithfulness.

\begin{table}[ht]
\centering
\caption{\textbf{Localization metrics on ImageNet: SAM~2 IoU, Pointing Game accuracy, and BBox IoU.} Best per column in \textbf{bold}, second-best \underline{underlined}.}
\label{tab:localization}
\vspace{-0.1cm}
\small
\setlength{\tabcolsep}{0.5pt}
\begin{tabular}{l ccc ccc ccc}
\toprule
& \multicolumn{3}{c}{\textbf{SAM~2 IoU} ($\uparrow$)} & \multicolumn{3}{c}{\textbf{Pointing Game} ($\uparrow$)} & \multicolumn{3}{c}{\textbf{BBox IoU} ($\uparrow$)} \\
\cmidrule(lr){2-4} \cmidrule(lr){5-7} \cmidrule(lr){8-10}
\textbf{Method} & R18 & VGG & Inc. & R18 & VGG & Inc. & R18 & VGG & Inc. \\
\midrule
G$\times$I & 0.1582 & 0.1880 & 0.1765 & 0.4909 & 0.5276 & 0.5056 & 0.4902 & 0.4667 & 0.4902 \\
IG  & 0.1804 & 0.2124 & 0.1971 & 0.5576 & 0.6326 & 0.5611 & 0.4936 & 0.4645 & 0.4854 \\
BIG & 0.2084 & 0.2273 & 0.2155 & \underline{0.6576} & \underline{0.6602} & 0.6250 & 0.4993 & 0.4661 & 0.4855 \\
EIG & 0.1727 & 0.1972 & 0.1914 & 0.5576 & 0.5635 & 0.5611 & 0.4877 & 0.4634 & 0.4860 \\
MIG & 0.1756 & 0.2034 & 0.1953 & 0.5545 & 0.5829 & 0.6083 & 0.4943 & \underline{0.4718} & 0.4868 \\
GIG & 0.1880 & 0.1948 & 0.2142 & 0.6061 & 0.5829 & 0.6222 & 0.4589 & 0.4305 & 0.4217 \\
AGI & 0.1750 & 0.2079 & 0.1990 & 0.5576 & 0.6188 & 0.5667 & \underline{0.5054} & \textbf{0.4818} & \underline{0.5024} \\
IG$^2$ & 0.1835 & \underline{0.2312} & 0.2094 & 0.5364 & 0.6298 & 0.5361 & 0.4866 & 0.4463 & 0.4743 \\
SAMP & 0.1767 & 0.1903 & 0.1810 & 0.5091 & 0.5193 & 0.5667 & 0.3929 & 0.3418 & 0.3504 \\
\rowcolor{lightgray}
SIG & \textbf{0.2197} & \textbf{0.2426} & \textbf{0.2322} & \textbf{0.6758} & \textbf{0.6630} & \textbf{0.6694} & \textbf{0.5079} & 0.4684 & \textbf{0.5010} \\
\bottomrule
\end{tabular}
\vspace{-0.2cm}
\end{table}



\section{Generalization Beyond Images}
\label{app:text_domain}

To probe the scope of the coarse-to-fine inductive bias, we apply SIG to text classification by treating token-embedding matrices as the input difference and decomposing them with SVD. We evaluate on SST-2 (DistilBERT and BERT-base) and IMDb (BERT-base and BERT-large) using $500$ test examples. Table~\ref{tab:text_domain} reports DiffID for SIG at several overlap values $\omega$ against IG, $G\times I$, and BIG. With high $\omega$ (close to linear interpolation, i.e., close to IG), SIG approaches or matches IG. With low $\omega$ (strict singular-component ordering), DiffID degrades on three of four settings. This is consistent with the observation that SVD's variance ordering of token-embedding matrices does not, in general, correspond to a semantically meaningful coarse-to-fine progression, and supports our positioning of SIG as primarily an image-domain method.

\begin{table}[ht]
\centering
\vspace{-0.2cm}
\caption{\textbf{Text-domain DiffID ($\uparrow$, $N=500$).} Higher $\omega$ recovers near-IG behavior; lower $\omega$ underperforms.}
\label{tab:text_domain}
\vspace{-0.1cm}
\setlength{\tabcolsep}{2pt}
\begin{tabular}{l cc cc}
\toprule
& \multicolumn{2}{c}{\textbf{SST-2}} & \multicolumn{2}{c}{\textbf{IMDb}} \\
\cmidrule(lr){2-3} \cmidrule(lr){4-5}
\textbf{Method} & DistilBERT & BERT-base & BERT-base & BERT-large \\
\midrule
G$\times$I & 0.2382 & 0.2383 & 0.2152 & \textbf{0.1947} \\
IG  & \textbf{0.2720} & \underline{0.2676} & \textbf{0.2276} & 0.1616 \\
BIG & 0.2290 & 0.2365 & 0.1676 & 0.1547 \\
\midrule
\rowcolor{lightgray}
SIG $\omega=0.1$ & 0.2140 & 0.2251 & 0.1590 & \underline{0.1752} \\
\rowcolor{lightgray}
SIG $\omega=0.4$ & 0.2227 & 0.2445 & 0.1551 & 0.1712 \\
\rowcolor{lightgray}
SIG $\omega=0.7$ & 0.2593 & 0.2671 & 0.2057 & 0.1696 \\
\rowcolor{lightgray}
SIG $\omega=0.9$ & \underline{0.2713} & \textbf{0.2688} & \underline{0.2226} & 0.1642 \\
\bottomrule
\end{tabular}
\vspace{-0.2cm}
\end{table}

\section{Implementation Code}
\label{app:code}
We provide the PyTorch implementation of our proposed method, Spectral Integrated Gradients (SIG). The implementation consists of the core path generation based on Singular Value Decomposition (SVD) and the attribution computation via path integration. 
The source code of this paper has been made publicly available at 
\sigdoi. 
The GitHub repository is available at \sigcode.
\vspace{0.2cm}

\begingroup
\fvset{fontsize=\footnotesize,baselinestretch=0.9}%
\input{mintedcache/listing1.pygtex}%
\endgroup

\vspace{-0.4cm}

\begin{table*}[ht]
\centering
\caption{\textbf{Quantitative comparison on Oxford-IIIT Pet, Oxford 102 Flower, and ImageNet datasets using ResNet18, VGG16, and InceptionV3 classifiers.} Best results are highlighted in \textbf{Bold}, and second-best results are \underline{underlined}.}
\label{tab:quant_full}
\resizebox{\textwidth}{!}{%
\setlength{\tabcolsep}{7pt}
\begin{tabular}{ll rrr rrr rrr rrr}
\toprule
& & \multicolumn{3}{c}{\textbf{ResNet18}} & \multicolumn{3}{c}{\textbf{VGG16}} & \multicolumn{3}{c}{\textbf{InceptionV3}} \\
\cmidrule(lr){3-5} \cmidrule(lr){6-8} \cmidrule(lr){9-11} 
\textbf{Data} & \textbf{Method} & \textbf{DiffID ($\uparrow$)} & \textbf{Ins ($\uparrow$)} & \textbf{Del ($\downarrow$)} & \textbf{DiffID ($\uparrow$)} & \textbf{Ins ($\uparrow$)} & \textbf{Del ($\downarrow$)} & \textbf{DiffID ($\uparrow$)} & \textbf{Ins ($\uparrow$)} & \textbf{Del ($\downarrow$)} \\
\midrule
\rowcolor{palegray}
\cellcolor{white} & \textbf{\textbf{G $\times$ I}} \cite{shrikumar2016not}    & 0.2384 & 0.4378 & 0.1994 & 0.4060 & 0.5174 & 0.1114 & 0.2255 & 0.3940 & 0.1685 \\ 
\cellcolor{white} & \textbf{IG} \cite{sundararajan2017axiomatic}     & 0.3790 & 0.5186 & 0.1396 & 0.5258 & 0.6060 & 0.0802 & 0.3438 & 0.4748 & 0.1309 \\ 
\rowcolor{palegray}
\cellcolor{white} & \textbf{BIG} \cite{xu2020attribution}            & \textbf{0.5462} & \underline{0.6228} & \underline{0.0766} & 0.6252 & 0.6910 & \underline{0.0658} & 0.5246 & 0.5922 & \textbf{0.0676} \\ 
\cellcolor{white} & \textbf{EIG} \cite{jha2020enhanced}              & 0.3417 & 0.4946 & 0.1529 & 0.4853 & 0.5751 & 0.0898 & 0.3228 & 0.4646 & 0.1417 \\ 
\rowcolor{palegray}
\cellcolor{white} & \textbf{MIG} \cite{zaher2024manifold}            & 0.3586 & 0.4982 & 0.1396 & 0.5009 & 0.5829 & 0.0820 & 0.3273 & 0.4652 & 0.1378 \\ 
\cellcolor{white} & \textbf{GIG} \cite{kapishnikov2021guided}        & 0.3634 & 0.5093 & 0.1459 & 0.4709 & 0.5550 & 0.0841 & 0.3586 & 0.4880 & 0.1294 \\ 
\rowcolor{palegray}
\cellcolor{white} & \textbf{AGI} \cite{pan2021explaining}            & 0.2787 & 0.4453 & 0.1667 & 0.4447 & 0.5345 & 0.0898 & 0.3381 & 0.4589 & 0.1207 \\ 
\cellcolor{white} & \textbf{IG$^2$} \cite{zhuo2024ig2}               & 0.3823 & 0.5264 & 0.1441 & 0.6060 & 0.6832 & 0.0772 & 0.4273 & 0.5315 & 0.1042 \\ 
\rowcolor{palegray}
\cellcolor{white} & \textbf{SAMP} \cite{zhang2024path}               & 0.4655 & 0.5315 & \textbf{0.0661} & 0.5357 & 0.6162 & 0.0805 & 0.4751 & 0.5655 & 0.0904 \\ 
\cmidrule(lr){2-11}
\rowcolor{lightgray}
\cellcolor{white} & \textbf{SIG} ($\omega$=0.1)              & 0.4667 & 0.5592 & 0.0925 & 0.5973 & 0.6625 & \textbf{0.0652} & 0.4336 & 0.5132 & 0.0796 \\ 
\rowcolor{lightgray}
\cellcolor{white} & \textbf{SIG} ($\omega$=0.2)              & 0.5114 & 0.6033 & 0.0919 & 0.6357 & 0.7042 & 0.0685 & 0.5042 & 0.5847 & 0.0805\\ 
\rowcolor{lightgray}
\cellcolor{white} & \textbf{SIG} ($\omega$=0.3)              & 0.5117 & 0.6099 & 0.0982 & 0.6441 & 0.7189 & 0.0748 & 0.5192 & 0.6009 & 0.0817 \\ 
\rowcolor{lightgray}
\cellcolor{white} & \textbf{SIG} ($\omega$=0.4)              & \underline{0.5210} & \textbf{0.6237} & 0.1027 & \textbf{0.6532} & \textbf{0.7270} & 0.0739 & \underline{0.5330} & \underline{0.6114} & \underline{0.0784} \\ 
\rowcolor{lightgray}
\cellcolor{white} & \textbf{SIG} ($\omega$=0.5)              & 0.5180 & 0.6201 & 0.1021 & \underline{0.6529} & \underline{0.7261} & 0.0733 & \textbf{0.5345} & \textbf{0.6147} & 0.0802 \\ 
\rowcolor{lightgray}
    \multirow{-15}{*}{\rotatebox[origin=c]{90}{\textbf{Oxford-IIIT Pet}}} 
\cellcolor{white} & \textbf{SIG} ($\omega$=0.6)              & 0.3631 & 0.5045 & 0.1414 & 0.5258 & 0.6024 & 0.0766 & 0.3904 & 0.5063 & 0.1159 \\ 
\midrule
\cellcolor{white} & \textbf{G $\times$ I} \cite{shrikumar2016not}    & 0.1222 & 0.2338 & 0.1116 & 0.2784 & 0.3576 & 0.0791 & 0.2000 & 0.2813 & 0.0813 \\ 
\rowcolor{palegray}
\cellcolor{white} & \textbf{IG} \cite{sundararajan2017axiomatic}     & 0.1769 & 0.2740 & 0.0971 & 0.3178 & 0.3847 & 0.0669 & 0.2551 & 0.3247 & 0.0696 \\ 
\cellcolor{white} & \textbf{BIG} \cite{xu2020attribution}            & \underline{0.2629} & \underline{0.3131} & \textbf{0.0502} & 0.3116 & 0.3711 & \textbf{0.0596} & 0.2556 & 0.3140 & 0.0584 \\ 
\rowcolor{palegray}
\cellcolor{white} & \textbf{EIG} \cite{jha2020enhanced}              & 0.1871 & 0.2842 & 0.0971 & 0.3218 & 0.3860 & 0.0642 & 0.2553 & 0.3249 & 0.0696 \\ 
\cellcolor{white} & \textbf{MIG} \cite{zaher2024manifold}            & 0.1738 & 0.2729 & 0.0991 & 0.3076 & 0.3751 & 0.0676 & 0.2531 & 0.3260 & 0.0729 \\ 
\rowcolor{palegray}
\cellcolor{white} & \textbf{GIG} \cite{kapishnikov2021guided}        & 0.1891 & 0.272 & 0.0829 & 0.2587 & 0.3449 & 0.0862 & 0.2611 & \textbf{0.3413} & 0.0802  \\
\cellcolor{white} & \textbf{AGI} \cite{pan2021explaining}            & 0.0136 & 0.1569 & 0.1433 & 0.1409 & 0.2689 & 0.1280 & 0.0787 & 0.1947 & 0.1160 \\ 
\rowcolor{palegray}
\cellcolor{white} & \textbf{IG$^2$} \cite{zhuo2024ig2}               & 0.0193 & 0.1713 & 0.1520 & 0.2189 & 0.3293 & 0.1104 & 0.0816 & 0.2053 & 0.1238 \\ 
\cellcolor{white} & \textbf{SAMP} \cite{zhang2024path}               & \textbf{0.2751} & \textbf{0.3498} & 0.0747 & 0.2869 & 0.3638 & 0.0769 & 0.2662 & \underline{0.3380} & 0.0718 \\ 
\cmidrule(lr){2-11}
\rowcolor{lightgray}
\cellcolor{white} & \textbf{SIG} ($\omega$=0.1)              & 0.2276 & 0.2831 & 0.0556 & 0.2996 & 0.3644 & 0.0649 & 0.2427 & 0.2971 & 0.0544 \\ 
\rowcolor{lightgray}
\cellcolor{white} & \textbf{SIG} ($\omega$=0.2)              & 0.2493 & 0.3018 & \underline{0.0524} & 0.3173 & 0.3802 & 0.0629 & 0.2653 & 0.3167 & 0.0513 \\ 
\rowcolor{lightgray}
\cellcolor{white} & \textbf{SIG} ($\omega$=0.3)              & 0.2513 & 0.3053 & 0.0540 & \textbf{0.3331} & \underline{0.3942} & \underline{0.0611} & \underline{0.2753} & 0.3229 & \textbf{0.0476} \\ 
\rowcolor{lightgray}
\cellcolor{white} & \textbf{SIG} ($\omega$=0.4)              & \underline{0.2533} & 0.3116 & 0.0582 & 0.3224 & 0.3856 & 0.0631 & \textbf{0.2822} & \underline{0.3316} & \underline{0.0493} \\ 
\rowcolor{lightgray}
\cellcolor{white} & \textbf{SIG} ($\omega$=0.5)              & 0.2471 & 0.3071 & 0.0600 & 0.3136 & 0.3780 & 0.0644 & 0.2751 & 0.3284 & 0.0533 \\ 
\rowcolor{lightgray}
    \multirow{-15}{*}{\rotatebox[origin=c]{90}{\textbf{Oxford 102 Flower}}} 
\cellcolor{white} & \textbf{SIG} ($\omega$=0.6)              & 0.2136 & 0.2884 & 0.0749 & \underline{0.3311} & \textbf{0.3947} & 0.0636 & 0.2627 & 0.3176 & 0.0549 \\ 
\midrule
\rowcolor{palegray}
\cellcolor{white} & \textbf{G $\times$ I} \cite{shrikumar2016not}    & 0.1038 & 0.2278 & 0.1240 & 0.1882 & 0.2749 & 0.0867 & 0.1380 & 0.2776 & 0.1396 \\ 
\cellcolor{white} & \textbf{IG} \cite{sundararajan2017axiomatic}     & 0.1589 & 0.2560 & 0.0971 & 0.2404 & 0.3093 & 0.0689 & 0.1916 & 0.3073 & 0.1158 \\ 
\rowcolor{palegray}
\cellcolor{white} & \textbf{BIG} \cite{xu2020attribution}            & 0.3122 & 0.3504 & 0.0382 & 0.3258 & 0.3549 & \textbf{0.0291} & 0.3473 & 0.3956 & 0.0482 \\ 
\cellcolor{white} & \textbf{EIG} \cite{jha2020enhanced}              & 0.1500 & 0.2533 & 0.1033 & 0.2327 & 0.3047 & 0.0720 & 0.1911 & 0.3091 & 0.1180 \\ 
\rowcolor{palegray}
\cellcolor{white} & \textbf{MIG} \cite{zaher2024manifold}            & 0.1596 & 0.2556 & 0.096 & 0.2349 & 0.3040 & 0.0691 & 0.1940 & 0.3104 & 0.1164 \\ 
\cellcolor{white} & \textbf{GIG} \cite{kapishnikov2021guided}        & 0.2507 & 0.3142 & 0.0636 & 0.2813 & 0.3302 & 0.0489 & 0.3027 & 0.3789 & 0.0762 \\ 
\rowcolor{palegray}
\cellcolor{white} & \textbf{AGI} \cite{pan2021explaining}            & 0.1640 & 0.2553 & 0.0913 & 0.2260 & 0.2922 & 0.0662 & 0.2171 & 0.3096 & 0.0924 \\ 
\cellcolor{white} & \textbf{IG$^2$} \cite{zhuo2024ig2}               & 0.1824 & 0.2676 & 0.0851 & 0.3047 & 0.3584 & 0.0538 & 0.2569 & 0.3447 & 0.0878 \\ 
\rowcolor{palegray}
\cellcolor{white} & \textbf{SAMP} \cite{zhang2024path}               & 0.2880 & 0.3278 & 0.0398 & 0.2811 & 0.3118 & 0.0307 & 0.3089 & 0.3751 & 0.0662 \\ 
\cmidrule(lr){2-11}
\rowcolor{lightgray}
\cellcolor{white} & \textbf{SIG} ($\omega$=0.1)              & 0.3151 & 0.3478 & \textbf{0.0327} & 0.3464 & 0.3767 & \underline{0.0302} & 0.3342 & 0.3813 & \underline{0.0471} \\ 
\rowcolor{lightgray}
\cellcolor{white} & \textbf{SIG} ($\omega$=0.2)              & 0.3307 & 0.3669 & 0.0362 & 0.3647 & 0.3980 & 0.0333 & 0.3680 & 0.4100 & \textbf{0.0420} \\ 
\rowcolor{lightgray}
\cellcolor{white} & \textbf{SIG} ($\omega$=0.3)              & \textbf{0.3429} & 0.3787 & \underline{0.0358} & \textbf{0.3736} & \underline{0.4093} & 0.0358 & \underline{0.3807} & \underline{0.4278} & \underline{0.0471} \\ 
\rowcolor{lightgray}
\cellcolor{white} & \textbf{SIG} ($\omega$=0.4)              & \underline{0.3422} & \textbf{0.3800} & 0.0378 & \textbf{0.3736} & \textbf{0.4107} & 0.0371 & 0.3784 & 0.4264 & 0.0480 \\ 
\rowcolor{lightgray}
\cellcolor{white} & \textbf{SIG} ($\omega$=0.5)              & \underline{0.3422} & \underline{0.3789} & 0.0367 & \underline{0.3691} & 0.4091 & 0.0400 & \textbf{0.3836} & \textbf{0.4309} & 0.0473 \\ 
\rowcolor{lightgray}
    \multirow{-15}{*}{\rotatebox[origin=c]{90}{\textbf{ImageNet2012}}} 
\cellcolor{white} & \textbf{SIG} ($\omega$=0.6)              & 0.2302 & 0.2998 & 0.0696 & 0.2858 & 0.3440 & 0.0582 & 0.3116 & 0.3762 & 0.0647 \\ 
\bottomrule
\end{tabular}
}
\end{table*}

\begin{figure*}[ht]
    \centering
    \begin{subfigure}[b]{0.48\linewidth}
        \centering
        \includegraphics[width=\linewidth]{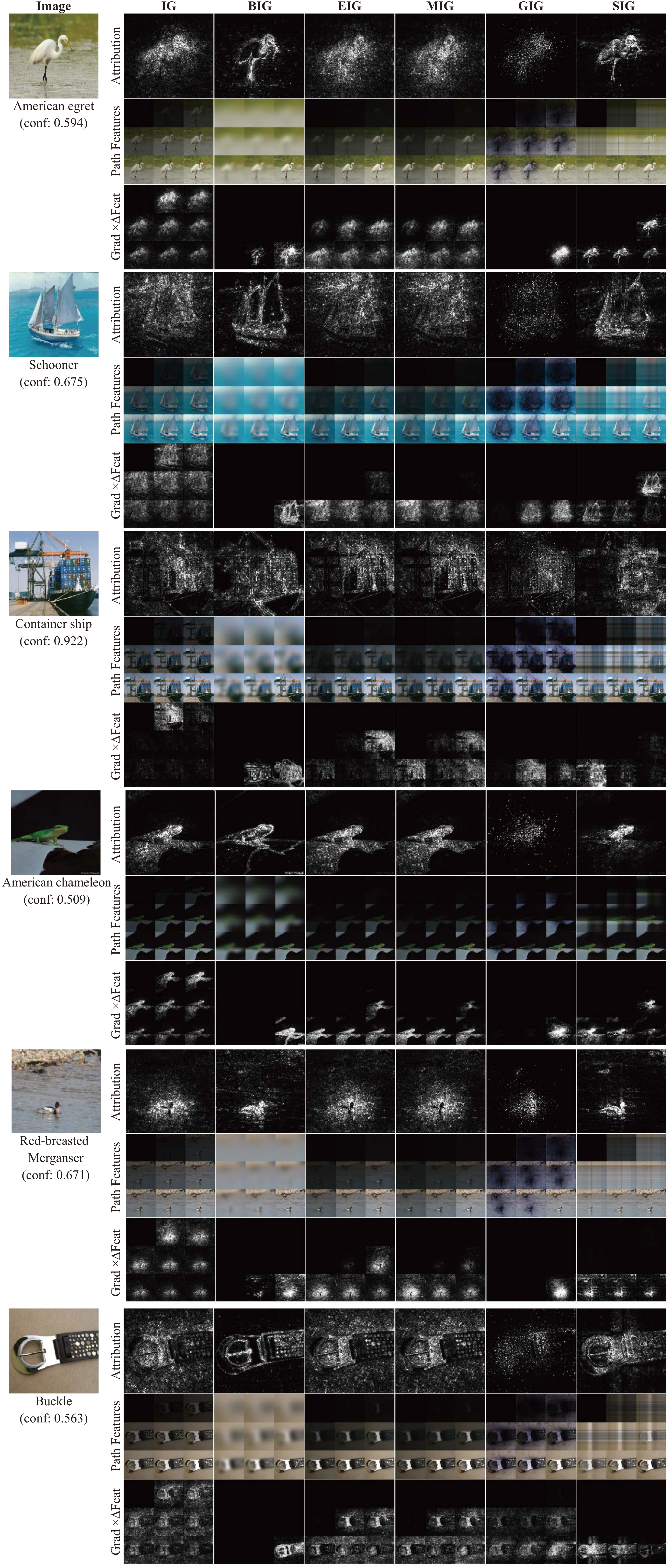}
        \caption{ImageNet2012}
    \end{subfigure}
    \hfill 
    \begin{subfigure}[b]{0.48\linewidth}
        \centering
        \includegraphics[width=\linewidth]{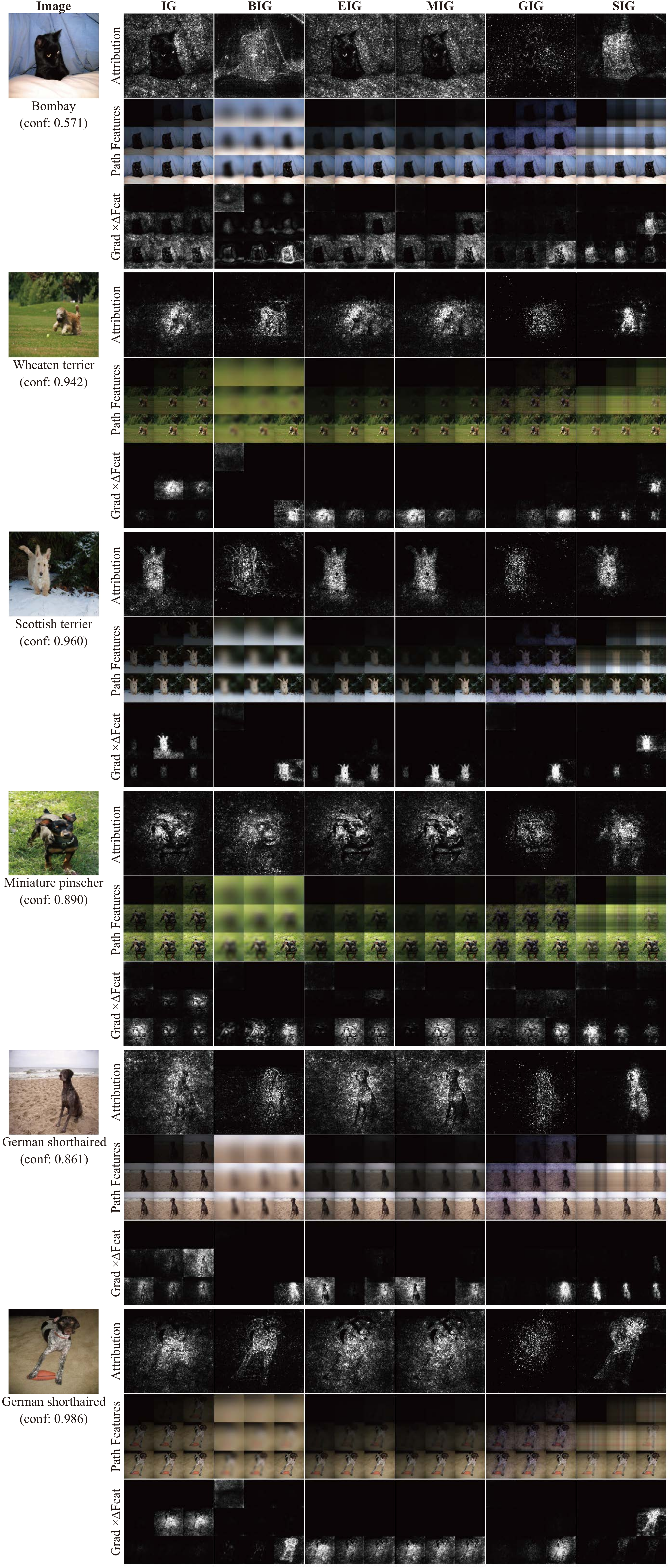}
        \caption{Oxford IIIT Pet}
    \end{subfigure}
    \caption{\textbf{Extended path analysis comparing IG, BIG, EIG, MIG, GIG, and SIG.} For each example, the top row displays the final attribution maps. The second and third rows visualize the evolution of path features and their corresponding gradients multiplied by the path difference, sampled at nine equally spaced intervals along the integration path. This demonstrates how SIG aggregates relevant attributions in a coarse-to-fine manner. (Conf.: Confidence)}
    \label{fig:path_image}
    \Description{Two side by side extended path analysis panels on ImageNet2012 (left) and Oxford IIIT Pet (right), comparing six attribution methods: IG, BIG, EIG, MIG, GIG, and SIG. For each example, the top row shows the final attribution maps of all six methods. The second and third rows display the path features and gradient contributions sampled at nine equally spaced steps along the integration path.}
\end{figure*}

\begin{figure*}[ht]
    \centering
    \includegraphics[width=\linewidth]{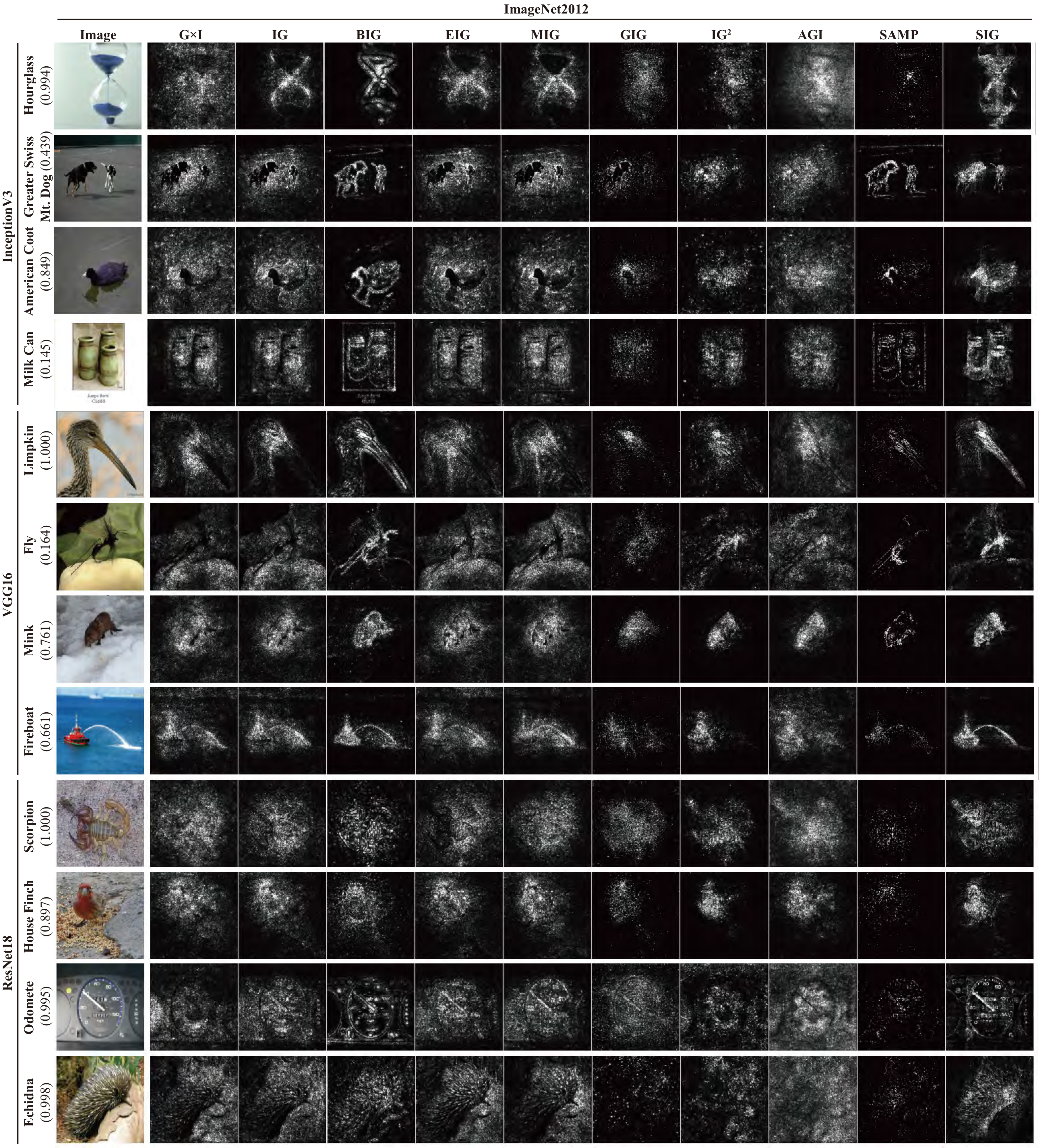}
    \caption{\textbf{Qualitative comparison against baselines on ImageNet 2012 using three classifiers.} Bold test labels indicate the predicted class, and numbers in brackets denote confidence.}
    \Description{Grid of attribution maps on ImageNet 2012 comparing ten methods across ResNet18, VGG16, and InceptionV1.}
    \label{fig:qual_imagenet}
\end{figure*}

\begin{figure*}[ht]
    \centering
    \includegraphics[width=\linewidth]{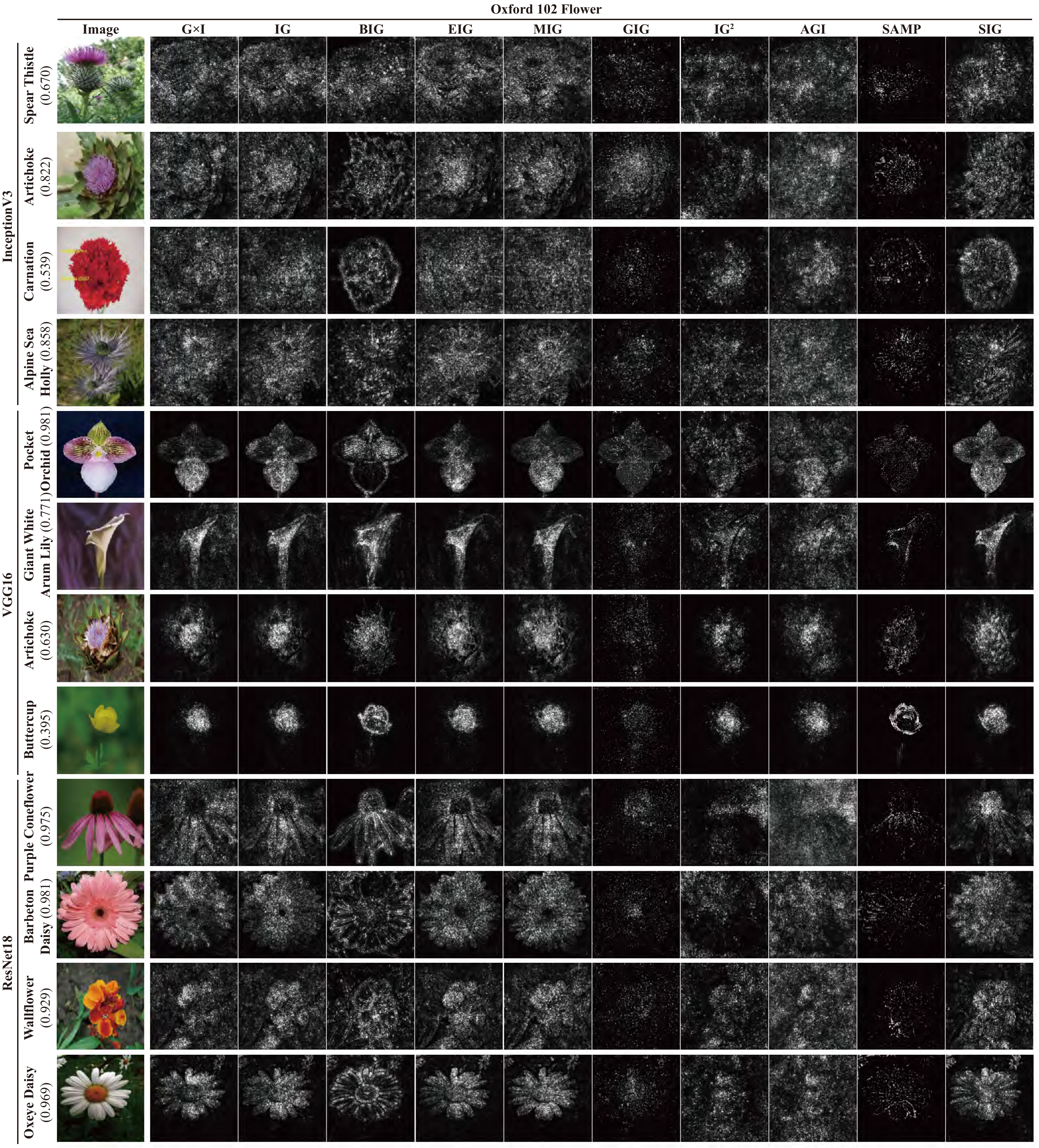}
    \caption{\textbf{Qualitative comparison on Oxford 102 Flower using three classifiers.} Bold test labels indicate the predicted class, and numbers in brackets denote confidence.}
    \Description{Grid of attribution maps on Oxford 102 Flower comparing ten methods across ResNet18, VGG16, and InceptionV1. SIG highlights petal and stamen regions while suppressing background clutter more effectively than baselines.}
    \label{fig:qual_oxfordflower}
\end{figure*}

\begin{figure*}[ht]
    \centering
    \includegraphics[width=\linewidth]{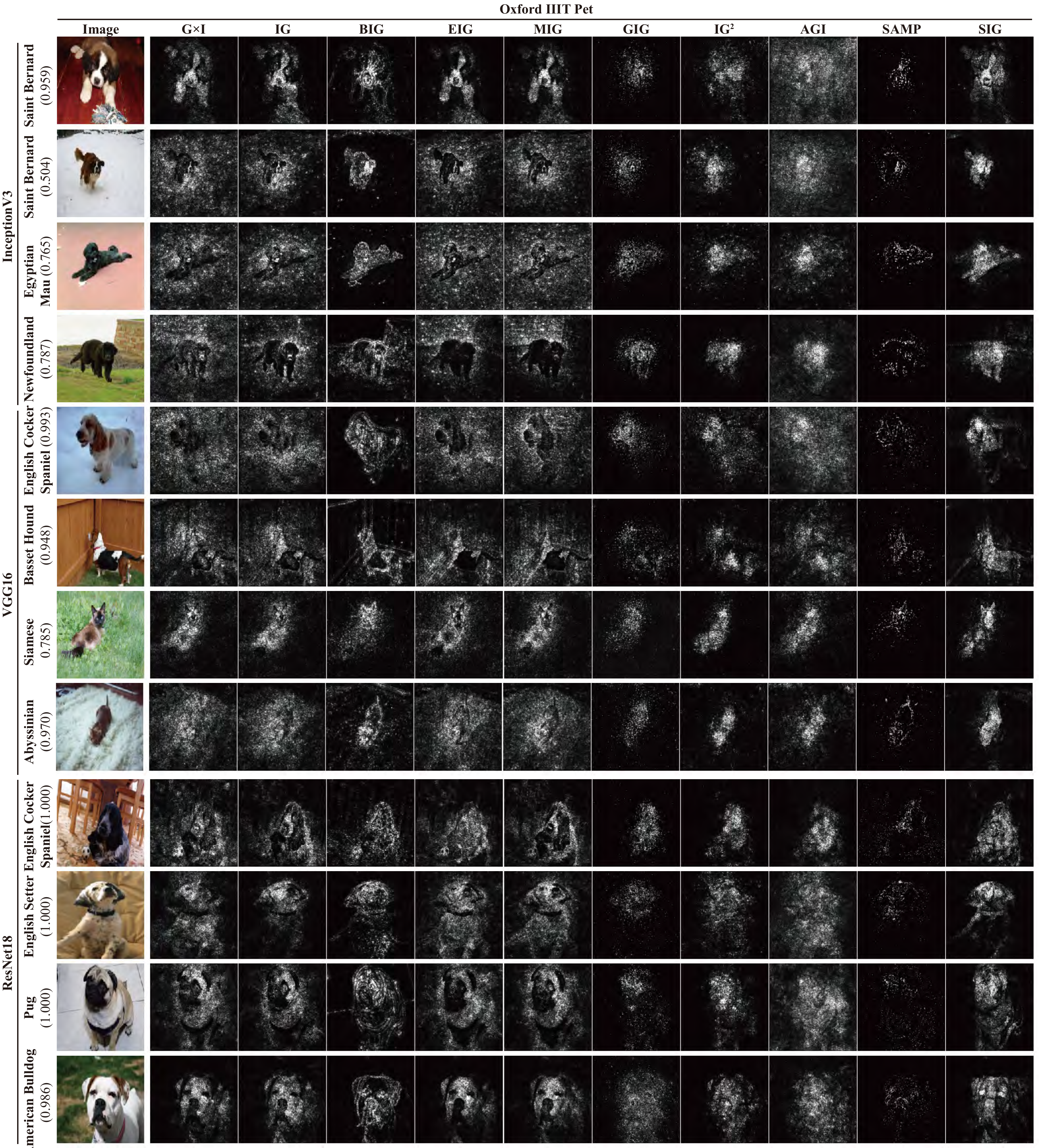}
    \caption{\textbf{Qualitative comparison on Oxford-IIIT Pet using three classifiers.} Bold test labels indicate the predicted class, and numbers in brackets denote confidence.}
    \Description{Grid of attribution maps on Oxford IIIT Pet comparing ten methods across ResNet18, VGG16, and InceptionV1. SIG concentrates attribution on facial and body features of each pet while other methods exhibit scattered patterns.}
    \label{fig:qual_oxfordpet}
\end{figure*}

\end{document}